\documentclass{article}

\usepackage[main,final]{neurips_2026}

\usepackage[utf8]{inputenc} % allow utf-8 input
\usepackage[T1]{fontenc}    % use 8-bit T1 fonts
\usepackage{hyperref}       % hyperlinks
\hypersetup{colorlinks=true,linkcolor=black,citecolor=black,urlcolor=blue}
\usepackage{url}            % simple URL typesetting
\usepackage{booktabs}       % professional-quality tables
\usepackage{amsfonts}       % blackboard math symbols
\usepackage{nicefrac}       % compact symbols for 1/2, etc.
\usepackage{microtype}      % microtypography
\usepackage{xcolor}         % colors
\usepackage{algorithm}
\usepackage{algorithmic}
\usepackage{amsmath}
\usepackage{amsthm}
\usepackage{enumitem}

\theoremstyle{plain}
\newtheorem{theorem}{Theorem}[section]
\newtheorem{proposition}[theorem]{Proposition}
\newtheorem{lemma}[theorem]{Lemma}
\newtheorem{corollary}[theorem]{Corollary}
\theoremstyle{definition}

\theoremstyle{remark}

\usepackage{booktabs}
\usepackage{multirow}
\usepackage{makecell}
\usepackage{array}
\usepackage{threeparttable}
\usepackage{adjustbox}
\usepackage[table]{xcolor}
\usepackage{colortbl}
\usepackage{graphicx}

\usepackage{placeins}
\usepackage{wrapfig}

\definecolor{darkblue}{RGB}{31,61,122}
\definecolor{lightgray}{gray}{0.45}

\newcommand{\MainTableStyle}{%
  \footnotesize
  \setlength{\tabcolsep}{8pt}
  \renewcommand{\arraystretch}{1.10}
}
\definecolor{tablegray}{RGB}{235,235,235}
\definecolor{zeroshotgreen}{RGB}{221,230,227}
\definecolor{zeroshotgreenstrong}{RGB}{204,218,214}

\title{MPQ: A Message-Passing View of Post-Training Quantization}

\author{%
  Yanshuo Chen \\ %\thanks{Use footnote for providing further information
  Department of Computer Science\\
  University of Maryland\\
  College Park, MD 20742 \\
  \texttt{cys@umd.edu} \\
  \And
  Chenxi Liu \\
  Department of Computer Science \\
  University of Maryland \\
  College Park, MD 20742 \\
  \texttt{cxliu539@umd.edu} \\
  \AND
  Reza Shirkavand \\
  Department of Computer Science \\
  University of Maryland \\
  College Park, MD 20742 \\
  \texttt{rezashkv@umd.edu} \\
  \And
  Heng Huang \\
  Department of Computer Science \\
  University of Maryland \\
  College Park, MD 20742 \\
  \texttt{heng@umd.edu} \\
}

\begin{document}

\maketitle

\begin{abstract}
  Post-training quantization (PTQ) is a key technique for efficiently deploying large language models, as it compresses weights into low-bit representations without retraining. Existing PTQ solvers, including GPTQ, LDLQ, and block coordinate descent, commit each weight block to a single codeword before its neighbors are considered, leaving near-optimal alternatives unexplored. In this work, we cast row-wise PTQ as posterior inference over discrete codebooks, in which existing solvers correspond to a zero-temperature, hard-decision limit retaining only the top codeword per block. We propose \emph{Message-Passing Quantization} (MPQ), which preserves soft posterior information across blocks during inference and commits to hard codewords only at the output stage. MPQ first uses vector approximate message passing to compute approximate blockwise posteriors, then refines a hard-decision initialization by exact block coordinate descent. Across 2-bit lattice and 3- and 4-bit scalar quantization on dense language models from Llama, Qwen, and Mistral families, MPQ improves perplexity and zero-shot accuracy over strong hard-decision baselines, and composes favorably with preprocessing techniques such as activation-aware scaling and incoherence processing.
Our implementation is available at \url{https://github.com/poseidonchan/MPQ}.

\end{abstract}

\section{Introduction}
\label{sec:introduction}
% Post-training quantization (PTQ) compresses a pretrained language model by mapping full-precision weights to low-bit discrete codes, enabling deployment under tight memory and compute budgets without additional retraining. However, recent progress has concentrated heavily on the front-end, while back-end solvers have evolved more incrementally.
% \reza{what is front end and back end? shouldn't you explain?}
% On the front-end side, activation scaling methods, incoherence processing methods, structured codebooks and inter-layer error propagation objective design have substantially expanded the feasibility of ultra-low-bit quantization \citep{awq,smoothquant,quip,quipsharp,quarot,spinquant,qtip,yaqa}. \reza{separate the previous citations and put them in the right place, e.g. put activation scaling methods after activation scaling.} However, on the back-end solver side, most existing methods quantize blocks sequentially, make an immediate hard assignment to a single codeword, and compensate for the resulting error greedily. GPTQ, LDLQ, and subsequent block coordinate descent variants differ in scheduling and local approximation, but all remain fundamentally sequential hard-decision solvers \citep{gptq,quip,quantease,cdquant,ojbkq}. This design is efficient, but it commits early to one discrete trajectory and leaves nearby high-quality solutions unexplored.

The growing capabilities of large language models have made them increasingly useful, motivating efficient deployment on personal devices~\citep{gpt4v,multi-crit,self-debate,chen2025watermark,mbpo}. Post-training quantization (PTQ) compresses a pretrained language model by mapping full-precision weights to low-bit discrete codes, enabling deployment under tight memory and compute budgets without additional retraining. Recent progress has mainly improved the preprocessing and objective-design stage of PTQ: activation scaling methods reduce quantization difficulty by rescaling weights and activations~\citep{awq,smoothquant,quarot,spinquant}; incoherence processing methods make weights easier to quantize through rotations or related transformations~\citep{quip,quipsharp}; structured codebooks improve the expressiveness of low-bit representations~\citep{qtip,quipsharp}; and inter-layer error propagation objectives better account for how quantization errors affect later layers~\citep{yaqa,qronos}.

In contrast, the discrete solver stage has changed more incrementally. Most existing solvers quantize blocks sequentially, make an immediate hard assignment to a single codeword, and then greedily compensate for the resulting error. GPTQ, LDLQ, and later block coordinate descent variants differ in scheduling and local approximation, but they remain fundamentally hard-decision solvers \citep{gptq,quip,quantease,cdquant,ojbkq}. This approach is efficient but commits to a single hard codeword per block before its neighbors are processed.

% Such early commitment is most consequential in the low-bit regime. When the calibration Hessian strongly couples blocks and the discrete codebook leaves several codewords nearly tied at each step, a sequential hard-decision solver that commits early can foreclose globally favorable trajectories before they are explored. Once preprocessing, representation and objective design have improved conditioning, the main bottleneck has therefore shifted to the solver itself. Rather than collapsing each block to a single codeword, the solver should carry multiple near-optimal candidates forward and defer commitment until subsequent blocks provide additional information.

This early commitment is especially harmful in the low-bit regime. With very few available codes, several codewords can produce nearly the same local error, and strong Hessian coupling makes the best choice for one block depend on its neighbors. A hard-decision solver therefore commits to one codeword trajectory through the discrete assignment space, with no mechanism to recover if those early choices turn out to be globally suboptimal. As preprocessing, codebook, and objective design improve the conditioning of the quantization problem, the discrete solver remains a relatively underexplored axis. Instead of assigning each block to a single codeword immediately, the solver should maintain a soft distribution over plausible codewords for each block and let these distributions interact with their neighbors before any block commits to a hard decision.

% We start from a different angle and frame row-wise PTQ as a Gaussian-discrete inference problem. Under this view, existing solvers such as GPTQ, LDLQ, and block coordinate descent are zero-temperature procedures on the same underlying posterior, differing only in how they traverse the combinatorial search space. This unified view motivates exploring finite-temperature inference, which maintains a distribution over plausible codewords before making any hard commitment. We instantiate this idea with \emph{Message-Passing Quantization} (MPQ), a two-stage solver that moves from finite to zero temperature. It first runs vector approximate message passing (VAMP) at finite temperature to collect global information about near-tied codeword assignments and then projects the VAMP posterior to a hard initialization and refines it with block coordinate descent. Experiments on models from the Llama, Qwen, and Mistral families show that MPQ consistently matches or improves upon greedy solver baselines, with the largest gains at 3-bit scalar and 2-bit lattice precision.

We take a message-passing view of row-wise PTQ, casting it as posterior inference over a discrete codebook with a Gaussian likelihood. From this view, existing solvers such as GPTQ, LDLQ, and block coordinate descent are zero-temperature message-passing schemes that propagate point estimates between blocks and discard other plausible codewords. This perspective naturally suggests a finite-temperature alternative, where messages carry distributions over several likely codewords before any block commits.

We instantiate this idea with \emph{Message-Passing Quantization} (MPQ), a two-stage solver that first explores uncertainty and then commits to a discrete solution. In the first stage, MPQ uses vector approximate message passing (VAMP) at finite temperature to aggregate information about near-tied codeword assignments across blocks. In the second stage, it converts the resulting posterior into a hard initialization and refines it with block coordinate descent. Experiments on Llama, Qwen, and Mistral models show that MPQ matches or improves over greedy solver baselines, with the largest gains at 3-bit scalar and 2-bit lattice precision.

% Our contributions are as follows.
% \begin{itemize}
% [leftmargin=1.35em,topsep=2pt,itemsep=1.5pt,parsep=0pt]
%   \item We cast row-wise PTQ as inference on a Gaussian-discrete inference problem and place GPTQ, LDLQ, and block coordinate descent under a common zero-temperature interpretation.
%   \item \textcolor{red}{We propose MPQ, a message-passing solver that combines finite-temperature VAMP with block coordinate descent, and show that it achieves competitive or superior results across 2/3/4-bit quantization on seven models from three families.} \reza{if competetitive, it should be faster. if not, don't put competitive here.}
%   \item \textcolor{red}{We provide supporting theory that characterizes the asymptotic behavior of the VAMP solver, explains why the finite-temperature warm start improves zero-temperature refinement, and formalizes the mechanism through which preprocessing benefits MPQ more than hard solvers.}
% \end{itemize}

Our contributions are as follows.
\begin{itemize}
[leftmargin=1.35em,topsep=2pt,itemsep=1.5pt,parsep=0pt]
  \item We give a message-passing view of row-wise PTQ as posterior inference over a discrete codebook under a Gaussian likelihood, recovering existing solvers as zero-temperature limits and motivating finite-temperature inference as an unexplored solver-side direction.
  \item We propose Message-Passing Quantization (MPQ), a two-stage solver that combines finite-temperature vector approximate message passing with exact block coordinate descent.
  \item MPQ improves over strong solver baselines in perplexity and zero-shot accuracy across Llama, Qwen, and Mistral at 2-, 3-, and 4-bit quantization, with the largest gains in the lowest-bit regimes
  \item We complement the method with theoretical analysis of the VAMP solver's asymptotic behavior and the role of preprocessing in finite-temperature inference.
\end{itemize}

\section{Related Work}
\label{sec:relatedwork}
\paragraph{Preprocessing, codebook, and objective design.}
Many studies improve the quantization problem before the solver runs. Scaling and equalization methods such as AWQ, SmoothQuant, OmniQuant, and MagR reduce dynamic-range imbalance or protect salient channels before rounding~\citep{awq,smoothquant,omniquant,magr}. Rotation and incoherence methods such as QuIP, QuIP\#, QuaRot, SpinQuant, and QTIP reshape the Hessian geometry using random or learned orthogonal transforms~\citep{quip,quipsharp,quarot,spinquant,qtip}. Fine-tuning methods such as PV-Tuning further improve the compressed representation after quantization~\citep{pv-tuning}. YAQA and Qronos extend layer-wise PTQ to account for errors propagating across layers~\citep{yaqa,qronos}.

\paragraph{Solver-side PTQ.}
GPTQ made layerwise second-order quantization practical at LLM scale by developing OBQ-style formulations into an efficient column-wise pass~\citep{gptq}. QuIP and QuIP\# extended this to vector quantization with LDLQ and incoherence processing~\citep{quip,quipsharp}. QuantEase and CDQuant revisit local repair via explicit block coordinate descent~\citep{quantease,cdquant}. Recent theoretical work has shown that GPTQ is equivalent to Babai's nearest-plane algorithm on the quantization lattice~\citep{chen2025geometry,birnick2025lattice}. Building on this lattice perspective, OJBKQ further augments greedy Babai rounding with Klein's sampling to sample multiple stochastic trajectories in the search space~\citep{ojbkq}. While Klein sampling injects stochasticity across multiple Babai trajectories, each individual trajectory still commits to one codeword per block sequentially. Section~\ref{sec:preliminary} unifies these solvers as zero-temperature limits of a common posterior view.

% \paragraph{Message passing for discrete inference.}
% Row-wise PTQ is a linear inverse problem with discrete constraints. Methods in the approximate message passing (AMP) family are designed for such problems and maintain soft beliefs over each block rather than committing to a single codeword~\citep{amp,bayati2011dynamics,gamp}. VAMP further extends AMP to a broader operator class with state-evolution guarantees under a right-orthogonal invariance assumption, which the random Hadamard transform (RHT) common in modern PTQ pipelines naturally satisfies~\citep{vamp}. We therefore adopt VAMP as the finite-temperature inference stage of MPQ, with RHT preprocessing both improving conditioning and providing the structural assumption VAMP requires.

\paragraph{Message passing for discrete inference.}
Row wise PTQ is a linear inverse problem with discrete constraints. Methods in the approximate message passing family maintain soft beliefs over latent blocks rather than collapsing immediately to point estimates \citep{amp,bayati2011dynamics,gamp}. VAMP extends AMP with rigorous state evolution for large right orthogonally invariant operators \citep{vamp}. Although the random Hadamard transform used in modern PTQ pipelines does not make a finite calibration Hessian exactly right orthogonally invariant, it randomizes the coordinate basis and improves incoherence, making the right orthogonally invariant model a useful surrogate for analysis \citep{quip,quipsharp}. We therefore use VAMP as the finite temperature inference stage of MPQ and interpret the associated state evolution as qualitative guidance for the RHT conditioned PTQ problem.

\section{Preliminaries}
\label{sec:preliminary}
\paragraph{PTQ formulation.}
Let $W\in\mathbb{R}^{m_{\mathrm{out}}\times n}$ be the weight matrix of a single linear layer and $X\in\mathbb{R}^{m\times n}$ a matrix of calibration activations. Weight-only post-training quantization replaces $W$ with a quantized matrix $\hat W$ by minimizing the layer-wise reconstruction objective $\|X(W-\hat W)^\top\|_F^2$, which decomposes over the $m_{\mathrm{out}}$ output rows.

We identify each output row of $W$ with its transpose, so a single row is written as a column vector $w\in\mathbb{R}^n$. We partition the $n$ coordinates into $B=n/d$ non-overlapping blocks of dimension~$d$, assuming $d$ divides $n$ (or after grouping/padding otherwise). Each block~$b$ is constrained to a finite codebook $\mathcal{C}_b\subset\mathbb{R}^d$ of size~$M_b$; this covers both uniform scalar grids ($d=1$) and structured vector codebooks such as E8P12 ($d=8$, $M_b=2^{16}$)~\citep{quipsharp}. Writing $\hat w=(\hat w_1,\dots,\hat w_B)$ with $\hat w_b\in\mathcal{C}_b$, the per-row problem is
\begin{equation}
  \min_{\hat w_1\in\mathcal{C}_1,\;\dots,\;\hat w_B\in\mathcal{C}_B}\;
  \frac{1}{2}\bigl\|Xw-X\hat w\bigr\|_2^2.
  \label{eq:row_obj}
\end{equation}
Because the quadratic form $\frac{1}{2}\hat w^\top (X^\top X)\hat w$ couples all blocks through the off-diagonal structure of $X^\top X$, changing one block perturbs the residual seen by every other, making it a combinatorial problem over $\prod_b M_b$ configurations.

\paragraph{A unified probabilistic view.}
Fix an output row $w=(w_1,\ldots,w_B)$, and let
$E_w(\hat w):=\frac12\|Xw-X\hat w\|_2^2$ denote its per-row
contribution to the layer-wise reconstruction objective on the product
codebook $\mathcal C_{\mathrm{prod}}=\mathcal C_1\times\cdots\times\mathcal C_B$.
We introduce the finite-temperature Gibbs measure
\begin{equation}
\begin{aligned}
p_\beta(\hat w\mid w,X)
&=
\frac{1}{Z_\beta}
\exp\{-\beta E_w(\hat w)\}
\prod_{b=1}^B \rho_b(\hat w_b), \\
\rho_b(c)
&\propto
\exp\!\left\{
-\frac{\|c-w_b\|_2^2}{2\tau^2\sigma_b^2}
\right\},
\qquad c\in\mathcal C_b .
\end{aligned}
\label{eq:posterior}
\end{equation}
Equivalently, this Gibbs measure is the posterior induced by the
Gaussian likelihood
$Xw\mid \hat w\sim\mathcal N(X\hat w,\beta^{-1}I_m)$ and the
codebook-supported reference measure $\prod_{b=1}^B\rho_b$, where
$\beta$ is the inverse temperature. The reference measure has anchoring width $\tau\sigma_b$ formed from a fixed per-block scale $\sigma_b$ and a global multiplier $\tau$ that broadens the prior uniformly across blocks. %The unanchored case $\tau=\infty$ recovers the uniform reference measure.

Since $\mathcal C_{\mathrm{prod}}$ is finite and $\rho_b(c)>0$ for all
$c\in\mathcal C_b$, the zero-temperature limit $\beta\to\infty$
concentrates on the minimizer set of Eq.~\eqref{eq:row_obj}. Thus, in the zero-temperature limit, MAP inference under this Gibbs measure recovers the minimizers of the original per-row PTQ problem in Eq.~\eqref{eq:row_obj}. Hard-assignment PTQ solvers such as GPTQ, LDLQ, and block coordinate descent can be viewed as approximations to this zero-temperature MAP problem, differing in how they schedule blocks and how they account for inter-block coupling. At finite $\beta$, the posterior retains blockwise means, variances, and probability mass on near-best codewords, which our message-passing solver propagates.

\section{Message-Passing Quantization}
\label{sec:method}
MPQ instantiates finite-temperature inference on the Gibbs posterior in Eq.~\eqref{eq:posterior} and then refines the resulting hard assignment by exact block coordinate descent on the row-wise PTQ objective in Eq.~\eqref{eq:row_obj}. The first stage preserves uncertainty over near-tied codewords before commitment, and the second stage performs exact local repair after projection to a discrete initialization.

\begin{figure}[t]
\centering
\includegraphics[width=0.94\linewidth]{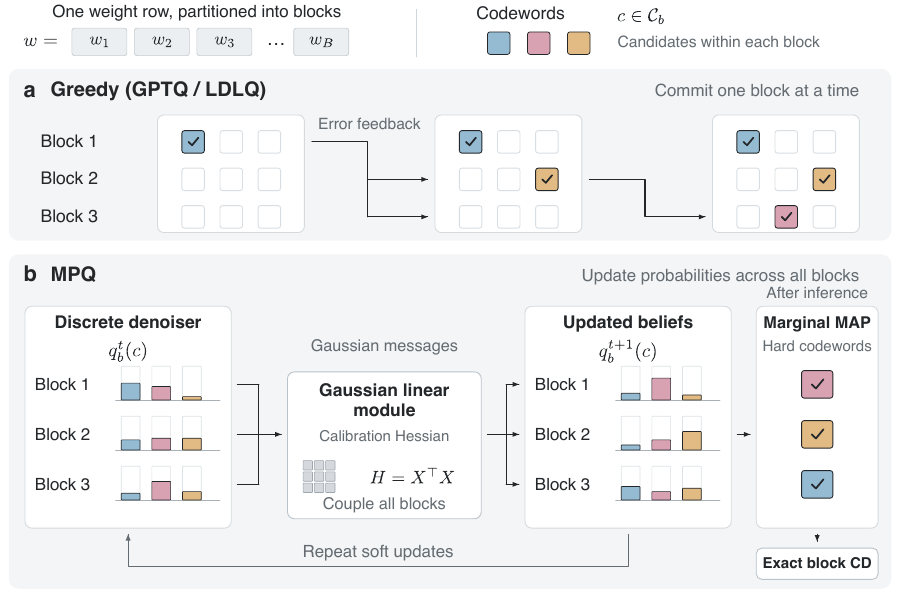}
\caption{{Greedy quantization and MPQ. Rows represent weight blocks $b$. Colors identify codewords $c\in\mathcal C_b$, and bar heights show probabilities $q_b^t(c)$ (Eq.~\eqref{eq:denoiser_posterior}). (a) Greedy solvers fix blocks sequentially and pass error to remaining blocks. (b) MPQ alternates denoising and Gaussian linear updates across all blocks, then applies marginal MAP projection and exact block CD.}}
\label{fig:mpq_overview}
\end{figure}

\subsection{Finite-temperature VAMP in rank space}
\label{sec:vamp_rank_space}

Let $A:=X$ and $y:=Xw$ with $x\in\mathbb R^n$ denoting the quantized row $\hat w$ in VAMP notation, so that the row-wise problem (Eq.~\eqref{eq:row_obj}) becomes $\min_{x\in\mathcal C_{\mathrm{prod}}}\|y-Ax\|_2^2$. Because the objective depends on $x$ only through the rank-$r$ coordinates $\Sigma_A V^\top x$, the first stage can be implemented entirely in rank space. With the compact SVD $A=U\Sigma_A V^\top$, where $\Sigma_A=\operatorname{diag}(s_1,\ldots,s_r)$ and $r=\operatorname{rank}(A)$, set $T:=\Sigma_A V^\top$ and $\tilde y:=U^\top y$, so that
\begin{equation}
\|y-Ax\|_2^2=\|\tilde y-Tx\|_2^2+\mathrm{const},
\label{eq:rank_space_obj}
\end{equation}
and all first-stage computations use $T\in\mathbb R^{r\times n}$ rather than the original calibration matrix.

At iteration $t$, VAMP alternates between a blockwise discrete denoiser and a Gaussian linear module. Given a Gaussian cavity $(r_1^t,\gamma_1^t)$, with $r_{1,b}^t\in\mathbb R^d$ denoting the $b$-th block of $r_1^t$, the denoiser updates each block independently via
\begin{equation}
q_b^t(c)
\propto
\rho_b(c)\exp\!\left(-\frac{\gamma_1^t}{2}\|c-r_{1,b}^t\|_2^2\right),
\qquad c\in \mathcal C_b,
\label{eq:denoiser_posterior}
\end{equation}
with blockwise posterior mean and covariance
\begin{equation}
\hat x_{1,b}^t
=\sum_{c\in \mathcal C_b} c\,q_b^t(c),
\qquad
\Sigma_{1,b}^t
=\sum_{c\in \mathcal C_b}(c-\hat x_{1,b}^t)(c-\hat x_{1,b}^t)^\top q_b^t(c),
\label{eq:denoiser_moments}
\end{equation}
and average sensitivity
\begin{equation}
\alpha_1^t=\frac{\gamma_1^t}{n}\sum_{b=1}^B \operatorname{tr}\!\left(\Sigma_{1,b}^t\right).
\label{eq:alpha1}
\end{equation}
The standard VAMP extrinsic update then converts $(\hat x_1^t,\alpha_1^t)$ into the Gaussian message $(r_2^t,\gamma_2^t)$ for the linear module; exact formulas are deferred to Appendix~\ref{app:impl_extrinsic}.

With $H:=A^\top A$ and $g:=A^\top y$, the linear module solves the Gaussian quadratic subproblem $\hat x_2^t=(\beta^t H+\gamma_2^t I)^{-1}(\beta^t g+\gamma_2^t r_2^t)$ at the current empirical-Bayes inverse temperature $\beta^t$ (adapted within iterations, see Appendix~\ref{app:impl_extrinsic}), which in rank space evaluates to
\begin{equation}
\hat x_2^t
=
r_2^t
+
V\operatorname{diag}\!\left(\left\{\frac{\beta^t s_\ell}{\beta^t s_\ell^2+\gamma_2^t}\right\}_{\ell=1}^r\right)
\left(\tilde y-\Sigma_A V^\top r_2^t\right),
\label{eq:linear_rank_space}
\end{equation}
with average sensitivity
\begin{equation}
\alpha_2^t=\frac{1}{n}\operatorname{tr}\!\left[\gamma_2^t(\beta^t H+\gamma_2^t I)^{-1}\right].
\label{eq:alpha2}
\end{equation}
A second extrinsic update then maps $(\hat x_2^t,\alpha_2^t)$ back to $(r_1^{t+1},\gamma_1^{t+1})$, again with exact formulas in Appendix~\ref{app:impl_extrinsic}. In the SVD form, the null-space component of $r_2^t$ is left unchanged, corresponding to the $(n-r)/n$ contribution implicit in Eq.~\eqref{eq:alpha2} when $r<n$. We repeat these denoiser and linear-module updates for $T_{\mathrm{vamp}}$ iterations, keeping every block soft-valued throughout the first stage.

\subsection{Zero-temperature exact local refinement}

After $T_{\mathrm{vamp}}$ iterations, MPQ forms one final denoiser posterior at $(r_1^{T_{\mathrm{vamp}}},\gamma_1^{T_{\mathrm{vamp}}})$ and projects it to a hard initialization by blockwise marginal MAP,
\begin{equation}
x_b^{(0)}=\arg\max_{c\in \mathcal C_b} q_b^{T_{\mathrm{vamp}}}(c).
\label{eq:map_init}
\end{equation}

MPQ uses blockwise marginal modes, which need not form a joint MAP assignment because the Hessian couples the blocks. It then runs exact block coordinate descent (CD) on the original PTQ objective in Eq.~\eqref{eq:row_obj}, for up to $S_{\mathrm{CD}}$ sweeps with early stopping. Let
\begin{equation}
u:=\tilde y-Tx
\label{eq:residual_def}
\end{equation}
be the current residual, and let $T_b$ denote the columns of $T$ associated with block $b$. When block $b$ is set to a candidate $c\in \mathcal C_b$, the resulting objective value is
\begin{equation}
F_b(c)=\frac12\|u-T_b(c-x_b)\|_2^2.
\label{eq:block_obj}
\end{equation}
The block update is therefore
\begin{equation}
x_b\leftarrow\arg\min_{c\in \mathcal C_b} F_b(c).
\label{eq:block_update}
\end{equation}
This search is exact within each block, but it is still local because each update is evaluated against the current residual. Appendix~\ref{app:impl_mahalanobis} formalizes this as nearest-codeword search in a Mahalanobis metric whose center moves with the iteration. The VAMP warm start is therefore useful not because it replaces local search, but because it enters refinement from a better basin.

Algorithm~\ref{alg:mpq} summarizes the row-wise MPQ procedure at the level of the mathematical updates. A Python-style pseudocode implementation is given in Appendix~\ref{app:impl_pseudocode}.

\begin{algorithm}[t]
\caption{MPQ for one output row}
\label{alg:mpq}
\small
\begin{algorithmic}[1]
\REQUIRE rank-space $(T,\tilde y)$, row $w$, codebooks $\{\mathcal C_b\}$, prior scales $\{\sigma_b\}$, anchoring multiplier $\tau$, VAMP iterations $T_{\mathrm{vamp}}$, CD budget $S_{\mathrm{CD}}$
\STATE Initialize ($r_1^0, \gamma_1^0, \beta^0$) as in Appendix~\ref{app:impl_extrinsic}
\FOR{$t=0,\ldots,T_{\mathrm{vamp}}-1$}
    \STATE \textbf{Denoiser:} for each block $b$, form $q_b^t$ \eqref{eq:denoiser_posterior} and compute $(\hat x_{1,b}^t,\Sigma_{1,b}^t)$ \eqref{eq:denoiser_moments}; aggregate $\alpha_1^t$ \eqref{eq:alpha1}
    \STATE \textbf{Extrinsic:} apply \eqref{eq:extrinsic_12} to form $(r_2^t,\gamma_2^t)$
    \STATE \textbf{Linear:} compute $\hat x_2^t$ \eqref{eq:linear_rank_space} and $\alpha_2^t$ \eqref{eq:alpha2}; update $\beta^{t+1}$ \eqref{eq:beta_em}
    \STATE \textbf{Extrinsic:} apply \eqref{eq:extrinsic_21} to form $(r_1^{t+1},\gamma_1^{t+1})$
\ENDFOR
\STATE \textbf{MAP projection:} for each block $b$, set $x_b\leftarrow\arg\max_{c\in\mathcal C_b}q_b^{T_{\mathrm{vamp}}}(c)$
\STATE Set $u\leftarrow\tilde y-Tx$
\FOR{$p=1,\ldots,S_{\mathrm{CD}}$}
    \STATE For each $b$, set $x_b^{\mathrm{old}}\leftarrow x_b$, update $x_b$ by \eqref{eq:block_update}, and update $u\leftarrow u-T_b(x_b-x_b^{\mathrm{old}})$
    \STATE \textbf{break} if no block changed during this sweep
\ENDFOR
\RETURN $(x_1,\ldots,x_B)$
\end{algorithmic}
\end{algorithm}

\section{Theoretical Analysis}
\label{sec:theory}
% The key insight of MPQ is that VAMP keeps near-tie information that hard solvers discard. We analyze this effect in an idealized right-orthogonally invariant (ROI) surrogate and track the denoiser-side fixed-point precision $\gamma_1^\star$. Theorem~\ref{thm:warmstart} bounds the MAP projection error probability by an exponential in $\gamma_1^\star$, and Theorem~\ref{thm:linear_response} derives an exact formula for how diagonal preprocessing changes the linear-module error $E_{\mathrm{lin}}$, which through state evolution raises $\gamma_1^\star$. The two results together trace a chain from preprocessing to MAP accuracy.

The key insight of MPQ is that VAMP keeps near-tie information that hard solvers discard. We analyze this effect in an idealized ROI surrogate at the unanchored $\tau\to\infty$ limit, tracking the denoiser-side fixed-point precision $\gamma_1^\star$. Theorem~\ref{thm:warmstart} bounds the MAP projection error probability by an exponential in $\gamma_1^\star$, and Proposition~\ref{thm:linear_response} derives an exact formula for how diagonal preprocessing changes the linear-module error $E_{\mathrm{lin}}$, which through state evolution affects $\gamma_1^\star$. The two results together trace a chain from preprocessing to MAP accuracy. MPQ itself runs at finite $\tau$, so we treat this analysis as qualitative guidance rather than a tight characterization.

\subsection{Posterior concentration under VAMP}
\label{sec:posterior_concentration_under_vamp_anchor}
Consider the blockwise surrogate model
\begin{equation}
y = A x^\star + \xi,
\qquad
\xi \sim \mathcal N(0,\beta_0^{-1} I),
\label{eq:surrogate_main}
\end{equation}
where $y$ is the noisy observation, $x^\star=[C_1^\star;\ldots;C_B^\star]$, $C_b^\star\stackrel{\mathrm{i.i.d.}}{\sim}\operatorname{Unif}(\mathcal C)$ for a finite codebook $\mathcal C\subset\mathbb R^d$, and $A$ is ROI. Under VAMP state evolution, let $(\gamma_1^\star,\gamma_2^\star)$ be a locally stable fixed point reached by damped VAMP \citep[Theorem~1]{vamp}; Appendix~\ref{app:proof_se_specialization} specializes the recursion to our blockwise discrete denoiser. At this fixed point, each block sees an effective additive white Gaussian noise (AWGN) channel of precision $\gamma_1^\star$, and the following theorem bounds the per-block MAP error from that channel.
\begin{theorem}[Posterior concentration and warm-start accuracy]
\label{thm:warmstart}
At the state-evolution fixed point, for any block, the effective channel can be written as
\begin{equation}
R = C^\star + \eta,
\qquad
\eta\sim\mathcal N\bigl(0,(\gamma_1^\star)^{-1}I_d\bigr),
\label{eq:effective_channel_two}
\end{equation}
where $C^\star\sim\operatorname{Unif}(\mathcal{C})$ is the true codeword, indexed as $C^\star = c^{(j^\star)}$. For each $j$, define the score
\begin{equation*}
s_j(R):=-\frac{\gamma_1^\star}{2}\|R-c^{(j)}\|_2^2,
\end{equation*}
and define $\delta_j := c^{(j)} - c^{(j^\star)}$ for each competitor $j \neq j^\star$.
Then the blockwise MAP error $\hat j_{\mathrm{MAP}}:=\arg\max_j s_j(R)$ obeys
\begin{equation}
\mathbb P\bigl(\hat j_{\mathrm{MAP}}\neq j^\star\bigr)
\leq
\sum_{j\neq j^\star}
\exp\!\left(-\frac{\gamma_1^\star\|\delta_j\|_2^2}{8}\right).
\label{eq:map_bound_two}
\end{equation}

\end{theorem}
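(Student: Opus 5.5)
We would argue in two parts. First we justify the effective-channel representation \eqref{eq:effective_channel_two}, and then we bound the MAP error by a union bound over pairwise Gaussian tail events.

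\textbf{Step 1: effective channel.} At the state-evolution fixed point, the VAMP state evolution (\citep[Theorem~1]{vamp}, specialized in Appendix~\ref{app:proof_se_specialization}) states that the denoiser input $r_1$ is asymptotically distributed as $x^\star+\mathcal N(0,(\gamma_1^\star)^{-1}I)$. The noise is independent of $x^\star$, and the extrinsic (Onsager-corrected) construction makes it isotropic with exactly the precision $\gamma_1^\star$ that the denoiser uses. Restricting to block $b$ yields $R=C^\star+\eta$ with $\eta\sim\mathcal N(0,(\gamma_1^\star)^{-1}I_d)$ and $C^\star\sim\operatorname{Unif}(\mathcal C)$. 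We take this as given from the cited theorem and simply record it.

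\textbf{Step 2: union bound.} The event $\{\hat j_{\mathrm{MAP}}\neq j^\star\}$ is contained in $\bigcup_{j\neq j^\star}\{s_j(R)\ge s_{j^\star}(R)\}$, so it suffices to bound each pairwise event. Conditioning on $C^\star=c^{(j^\star)}$ and substituting $R=c^{(j^\star)}+\eta$, we expand
\[
\|\eta\|_2^2\ge\|\eta-\delta_j\|_2^2 .
\]
This is equivalent to $\langle \eta,\delta_j\rangle\ge\|\delta_j\|_2^2/2$. Since $\langle\eta,\delta_j\rangle\sim\mathcal N(0,\|\delta_j\|_2^2/\gamma_1^\star)$, the pairwise probability equals
\[
Q\!\left(\tfrac{\sqrt{\gamma_1^\star}\|\delta_j\|_2}{2}\right).
\]
The Chernoff bound $Q(u)\le e^{-u^2/2}$ then gives
\[
\exp\!\left(-\gamma_1^\star\|\delta_j\|_2^2/8\right).
\]
(The sharper $\tfrac12 e^{-u^2/2}$ is also available but is not needed.) Summing over $j\neq j^\star$ gives \eqref{eq:map_bound_two} conditionally on $j^\star$. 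The bound then holds for the per-block error probability at any fixed $j^\star$, and averaging over the uniform $C^\star$ preserves it when the right-hand side is read with the realized $j^\star$.

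\textbf{Remarks on the argument.} Under the uniform prior in the unanchored limit, the MAP rule is exactly nearest-codeword decoding, which is why the score $s_j$ has the stated form. Ties occur with probability zero. The main obstacle is not the Gaussian computation but Step~1: rigorously identifying the blockwise denoiser input with an AWGN channel. This holds only asymptotically and in distribution under the ROI assumption, for block-separable Lipschitz denoisers, with $B\to\infty$ at fixed $d$. We would therefore state the bound as holding in the state-evolution limit, pointing to Appendix~\ref{app:proof_se_specialization}. We would also note that the finite-codebook posterior-mean denoiser is smooth, and hence Lipschitz on bounded sets, so the cited theorem applies.
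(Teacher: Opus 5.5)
Your proposal is correct and follows the paper's proof essentially step for step. Both arguments reduce each pairwise event $\{s_j(R)\ge s_{j^\star}(R)\}$ to the one-dimensional Gaussian tail $Q_{\mathrm G}(\sqrt{\gamma_1^\star}\|\delta_j\|_2/2)$, apply $Q_{\mathrm G}(t)\le e^{-t^2/2}$, union-bound over competitors, and take the effective AWGN channel as given from the state-evolution specialization. One small sharpening of your Step~1 remark: the finite-codebook denoiser is actually globally Lipschitz, not just Lipschitz on bounded sets, because its Jacobian $\gamma\Sigma(r)$ is bounded uniformly in $r$ by a multiple of the squared codebook diameter; global Lipschitz continuity is what the cited VAMP theorem requires.
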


\begin{proof}[Proof sketch.]
For each competitor $j$, the score difference $s_j(R)-s_{j^\star}(R)$ is Gaussian with mean $-\tfrac{\gamma_1^\star}{2}\|\delta_j\|_2^2<0$ and variance $\gamma_1^\star\|\delta_j\|_2^2$, so the standard Gaussian tail bound controls the probability that this competitor outranks the true codeword. A union bound over competitors yields Eq.~\eqref{eq:map_bound_two}. Appendix~\ref{app:proof_warmstart} contains the full argument.

% Hence, for fixed codeword geometry, the MAP warm-start error decays exponentially in the effective precision $\gamma_1^\star$. Under the i.i.d.\ block surrogate, summing the per-block error probability over the $B$ blocks gives the conditional expected number of incorrect blocks, and averaging this per-block probability over the uniform draw of $C^\star$ yields the unconditional bound.

\end{proof}

\paragraph{Remark.}
The bound~\eqref{eq:map_bound_two} makes the intuition developed in Section~\ref{sec:introduction} more precise. When codewords are close in Euclidean distance, the pairwise gaps $\Delta_j$ are small and the bound is loose. As $\gamma_1^\star$ increases, whether through better conditioning or preprocessing, the exponential tails shrink and the bound certifies a reliable MAP projection. Conversely, when the codebook is dense and many nearby competitors remain, the bound is 
no longer informative, and any residual error in this regime is left to the block coordinate descent. Section~\ref{sec:mechanism} reports diagnostics consistent with this prediction, showing that the dense E8P12 codebook yields a smaller terminal precision proxy and a substantially larger fraction of blocks updated by CD than the scalar codebook.

\subsection{Preprocessing and the linear module}
Theorem~\ref{thm:warmstart} shows that a higher effective precision $\gamma_1^\star$ yields a sharper warm start. Through state evolution (Appendix~\ref{app:proof_se_specialization}), $\gamma_1^\star$ is determined jointly by the denoiser MSE on the uniform codebook prior and the linear-module error
\begin{equation}
E_{\mathrm{lin}}(H,\gamma) := \frac{1}{n}\,\operatorname{tr}\bigl[(\beta_0 H+\gamma I)^{-1}\bigr],
\label{eq:Elin_def_main}
\end{equation}
which depends on the Hessian spectrum ( $\beta_0$ is introduced in Eq.~\eqref{eq:surrogate_main}). In this surrogate, fixing the codebook geometry, preprocessing enters the message-passing linear module entirely through $E_{\mathrm{lin}}$. We characterize this dependence for diagonal preprocessing such as AWQ~\citep{awq}.

Let $H_0\succeq 0$ be a fixed baseline Hessian and let $D=\operatorname{diag}(a_1,\dots,a_n)\succ 0$ be a positive diagonal scaling. We interpolate from $H_0$ to $DH_0D$ along the path $H_t := D_t H_0 D_t$ with $D_t := \exp(tL)$ and $L := \log D$, so that $D_0 = I$ and $D_1 = D$.

\begin{proposition}[Preprocessing effect on the linear module]
\label{thm:linear_response}
For every $\gamma>0$, the path $t\mapsto E_{\mathrm{lin}}(H_t,\gamma)$ is continuously differentiable and satisfies
\begin{equation}
\frac{d}{dt}E_{\mathrm{lin}}(H_t,\gamma)
=
-\frac{2\beta_0}{n}\,\operatorname{tr}\!\left[L\,H_t\,(\beta_0 H_t+\gamma I)^{-2}\right].
\label{eq:Elin_derivative_two}
\end{equation}
Consequently,
\begin{equation}
E_{\mathrm{lin}}(H_1,\gamma)-E_{\mathrm{lin}}(H_0,\gamma)
=
-\frac{2\beta_0}{n}\int_0^1 \operatorname{tr}\!\left[L\,H_t\,(\beta_0 H_t+\gamma I)^{-2}\right]dt.
\label{eq:Elin_integral_two}
\end{equation}
\end{proposition}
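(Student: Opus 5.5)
Direct matrix calculus on the resolvent $R_t := (\beta_0 H_t + \gamma I)^{-1}$ gives the derivative formula; integrating it gives the difference formula.

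\emph{Smoothness and invertibility.} Since $L$ is diagonal, $D_t = \exp(tL)$ is diagonal with entries $e^{t\log a_i}$, real-analytic in $t$, and $\dot D_t = L D_t = D_t L$. Hence $H_t = D_t H_0 D_t$ is real-analytic in $t$, and
\[
\dot H_t = L D_t H_0 D_t + D_t H_0 D_t L = L H_t + H_t L .
\]
Because $H_0\succeq 0$ and $D_t$ is symmetric, we have $H_t\succeq 0$. Therefore $\beta_0 H_t+\gamma I\succeq \gamma I\succ 0$ for every $\gamma>0$, so $R_t$ exists for all $t$. Matrix inversion is smooth on invertible matrices, so $t\mapsto R_t$ is $C^\infty$. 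This settles the continuous-differentiability claim.

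\emph{Differentiation.} Differentiating $R_t(\beta_0H_t+\gamma I)=I$ gives $\dot R_t = -\beta_0 R_t \dot H_t R_t$. Taking traces and using cyclicity,
\[
\frac{d}{dt}\operatorname{tr} R_t = -\beta_0\operatorname{tr}\bigl[R_t^2 \dot H_t\bigr] = -\beta_0\operatorname{tr}\bigl[R_t^2(LH_t+H_tL)\bigr].
\]
Both $H_t$ and $R_t^2$ are polynomial/resolvent functions of the same symmetric matrix, so they commute. Cyclicity therefore gives $\operatorname{tr}[R_t^2 H_t L] = \operatorname{tr}[L R_t^2 H_t] = \operatorname{tr}[L H_t R_t^2]$. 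Likewise $\operatorname{tr}[R_t^2 L H_t] = \operatorname{tr}[L H_t R_t^2]$. The two terms are equal, and dividing by $n$ yields Eq.~\eqref{eq:Elin_derivative_two}.

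\emph{Integration.} Since the derivative is continuous on $[0,1]$, the fundamental theorem of calculus gives Eq.~\eqref{eq:Elin_integral_two}, using $H_0$ at $t=0$ and $H_1 = DH_0D$ at $t=1$.

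The only delicate step is merging the two trace terms from $LH_t+H_tL$. It depends on $L$ sitting outside a product of mutually commuting matrices ($H_t$ and $R_t^{2}$), not on $L$ commuting with $H_t$, which is false in general. So I would write the cyclic permutations out explicitly rather than claim any commutation with $L$. Everything else is routine.
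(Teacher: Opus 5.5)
Your proposal is correct and follows the paper's proof essentially step for step: $\dot H_t = LH_t + H_tL$, the inverse-derivative identity for the resolvent, commutation of $H_t$ with $(\beta_0 H_t+\gamma I)^{-1}$ plus trace cyclicity to merge the two terms, and integration over $[0,1]$. Your explicit invertibility and smoothness argument ($\beta_0 H_t+\gamma I\succeq\gamma I$, smoothness of inversion) supplies the continuous-differentiability claim that the paper leaves implicit.
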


\begin{proof}[Proof sketch.]
Differentiate $H_t=D_tH_0D_t$ to obtain $\dot H_t=L H_t+H_tL$. Then differentiate the resolvent $(\beta_0 H_t+\gamma I)^{-1}$ with the inverse-derivative identity and use that $H_t$ commutes with any polynomial in itself, in particular with $(\beta_0 H_t+\gamma I)^{-1}$. Cyclicity of trace yields Eq.~\eqref{eq:Elin_derivative_two}, and integrating from $t=0$ to $t=1$ gives Eq.~\eqref{eq:Elin_integral_two}. Appendix~\ref{app:proof_linear_response} contains the full derivation.
\end{proof}

\paragraph{Remark.}
The integral~\eqref{eq:Elin_integral_two} captures the effect of AWQ in isolation, but the production pipeline composes AWQ with a random Hadamard transformation (RHT) $O$, replacing $H_t$ by the conjugated Hessian $O^\top H_t O$. Since $E_{\mathrm{lin}}$ depends on the Hessian only through its eigenvalues, this conjugation leaves $E_{\mathrm{lin}}$ unchanged (Lemma~\ref{lem:orthogonal_invariance_elin}), and the analysis of $\Delta E_{\mathrm{lin}}$ along the AWQ path applies verbatim to the AWQ+RHT pipeline. For clear effect isolation, Corollary~\ref{cor:coordinate_response_elin} rewrites the integral as a sum over coordinates of nonnegative resolvent-weighted Hessian factors $\kappa_i$ scaled by the log-scales $\ell_i$, making the sign explicit. Coordinates with $\ell_i > 0$ decrease $E_{\mathrm{lin}}$ while those with $\ell_i < 0$ increase it, with the magnitude of each contribution weighted by $\kappa_i$. The diagnostic in Section~\ref{sec:mechanism} evaluates this signed quantity on the realized AWQ scales and shows that the net contribution is typically negative, consistent with AWQ lowering $E_{\mathrm{lin}}$ in the preprocessing pipeline.

\section{Experiments}
\label{sec:experiments}

\providecommand{\tg}[1]{\cellcolor{tablegray}#1}
\providecommand{\zg}[1]{\cellcolor{zeroshotgreen}#1}
\providecommand{\zgs}[1]{\cellcolor{zeroshotgreenstrong}#1}

\subsection{Setup}
\label{sec:setup}
\paragraph{Protocol.} 
We use 128 C4~\citep{c4} training-set sequences of length 2048 as calibration data to quantize dense language model weights, while keeping activations at 16 bits during evaluation. We measure WikiText-2~\citep{wikitext} and C4 perplexity and zero-shot normalized accuracy on ARC-Challenge, ARC-Easy~\citep{arc}, HellaSwag~\citep{hellaswag}, PIQA~\citep{piqa}, and Winogrande~\citep{winogrande} using \texttt{lm-eval} harness~\citep{lm-eval}. We also repeat the 3-bit and 4-bit scalar experiments on four Llama models with three independently sampled C4 calibration sets. Methods share each calibration draw. Appendix~\ref{app:calibration_resampling} reports mean perplexity and sample standard deviation.

\paragraph{Codebook and solvers.}
For 3/4-bit scalar codebook quantization, we use a group size of 128 and apply the activation-aware scaling~\citep{awq} followed by RHT~\citep{quip} as preprocessing for all solvers. For the GPTQ baseline, we use the strongest variant from GPTQ~\citep{gptq} with activation-ordered column processing. 
For 2-bit quantization,  we adopt vector quantization with the E8P12 codebook~\citep{quipsharp}, which provides better rate-distortion than scalar codebooks at this aggressive bit width. Following the standard recipe in QuIP\#~\citep{quipsharp}, we employ RHT for incoherence preprocessing, and take LDLQ as the corresponding greedy baseline~\citep{quipsharp}. 
For each greedy solver, GPTQ in the scalar setting and LDLQ in the lattice
setting, we also include a stronger baseline that applies 3 passes of block
coordinate descent (CD), matching the CD refinement used by MPQ. Additionally, we include QuantEase~\citep{quantease} and OJBKQ~\citep{ojbkq} as strong CD and randomized greedy baselines, respectively. Our MPQ solver's default setting is 16 VAMP iterations followed by 3 CD passes. Appendix~\ref{app:pilot} gives the detailed hyperparameter settings for MPQ.

\subsection{Llama-family perplexity scaling}
\label{sec:llama-ppl}

We first compare MPQ with baselines inside the Llama family across different model sizes~\citep{llama2,llama3}. Table~\ref{tab:ptq_ppl_main} shows that MPQ improves over the strongest baseline OJBKQ with the magnitude of the gain governed by the bit budget. At 4 bits, the baselines are already close to the BF16 ceiling and MPQ's lead is small but consistent. The gap widens at 3-bit scalar and widens further at 2-bit E8P12, where the greedy baselines degrade sharply. The GPTQ plus CD and LDLQ plus CD rows isolate the effect of the polishing stage. MPQ remains ahead when all initializations receive the same three CD passes. In the 2-bit lattice regime, its hard projection can start with a higher objective than the greedy initialization and reach a lower objective after the same CD refinement. The absolute gap narrows with model scale, consistent with larger models being more robust to quantization noise, but the ranking is preserved across all model sizes and three precision budgets.

\begin{table*}[h]
\centering
\caption{Llama-family PTQ perplexity scaling.}
\label{tab:ptq_ppl_main}
\begin{threeparttable}
\MainTableStyle
\begin{adjustbox}{width=0.9\textwidth}
\begin{tabular}{l *{5}{cc}}
\toprule
\multirow{2}{*}{\textbf{Method}}
& \multicolumn{2}{c}{\textbf{Llama3.2-1B}}
& \multicolumn{2}{c}{\textbf{Llama3.2-3B}}
& \multicolumn{2}{c}{\textbf{Llama3-8B}}
& \multicolumn{2}{c}{\textbf{Llama2-13B}}
& \multicolumn{2}{c}{\textbf{Llama2-70B}} \\
\cmidrule(lr){2-3}\cmidrule(lr){4-5}\cmidrule(lr){6-7}\cmidrule(lr){8-9}\cmidrule(lr){10-11}
& \textbf{Wiki} & \textbf{C4}
& \textbf{Wiki} & \textbf{C4}
& \textbf{Wiki} & \textbf{C4}
& \textbf{Wiki} & \textbf{C4}
& \textbf{Wiki} & \textbf{C4} \\
\midrule
\rowcolor{tablegray}
BF16
& 9.75 & 12.77
& 7.82 & 10.47
& 6.14 & 8.93
& 4.89 & 6.47
& 3.32 & 5.52 \\
\midrule
\multicolumn{11}{@{}l}{\emph{4-bit scalar codebook, group size 128}} \\
GPTQ
& 10.63 & 13.86
& 8.22 & 11.08
& 6.54 & 9.41
& 4.99 & 6.57
& 3.43 & 5.58 \\
GPTQ + CD
& 10.57 & 13.83
& 8.24 & 11.10
& 6.55 & 9.42
& 4.98 & 6.57
& 3.43 & 5.58 \\
QuantEase & 10.47 & 13.80 & 8.18 & 11.03 & 6.50 & 9.37 & 4.98 & 6.57 & 3.42 & 5.58 \\
OJBKQ & 10.43 & 13.68 & 8.15 & 10.99 & 6.48 & 9.35 & 4.97 & 6.56 & 3.42 & 5.58  \\
\rowcolor{zeroshotgreenstrong}
MPQ
& \textbf{10.38} & \textbf{13.63}
& \textbf{8.12} & \textbf{10.94}
& \textbf{6.46} & \textbf{9.33}
& \textbf{4.96} & \textbf{6.55}
& \textbf{3.41} & \textbf{5.57} \\
\midrule
\multicolumn{11}{@{}l}{\emph{3-bit scalar codebook,  group size 128}} \\
GPTQ
& 15.45 & 19.25
& 10.56 & 13.76
& 8.29 & 11.61
& 5.46 & 7.09
& 3.93 & 5.89 \\
GPTQ + CD
& 15.58 & 18.94
& 10.51 & 13.82
& 8.43 & 11.63
& 5.46 & 7.09
& 3.92 & 5.90 \\
QuantEase & 14.86 & 18.51 & 10.05 & 13.36 & 8.10 & 11.24 & 5.39 & 6.95 & 3.88 & 5.88 \\
OJBKQ & 14.26 & 18.00 & 9.93 & 13.22 & 7.95 & 11.08 & 5.40 & 6.98 & 3.87 & 5.86 \\
\rowcolor{zeroshotgreenstrong}
MPQ
& \textbf{13.65} & \textbf{17.42}
& \textbf{9.65} & \textbf{12.87}
& \textbf{7.81} & \textbf{10.88}
& \textbf{5.30} & \textbf{6.89}
& \textbf{3.78} & \textbf{5.80} \\
\midrule
\multicolumn{11}{@{}l}{\emph{2-bit E8P12 codebook}} \\
LDLQ
& 42.80 & 43.47
& 18.54 & 21.15
& 15.01 & 18.17
& 6.63 & 8.17
& 4.58 & 6.36 \\
LDLQ + CD
& 41.79 & 42.69
& 18.00 & 20.66
& 12.76 & 15.96
& 6.61 & 8.13
& 4.59 & 6.36 \\
QuantEase & 47.31 & 46.24 & 18.84 & 21.17 & 14.78 & 17.77 & 6.67 & 8.19 & 4.70 & 6.46 \\
OJBKQ & 39.28 & 39.72 & 17.55 & 19.94 & 13.84 & 16.57 & 6.55 & 8.11 & 4.58 & 6.36 \\
\rowcolor{zeroshotgreenstrong}
MPQ
& \textbf{31.67} & \textbf{31.48}
& \textbf{15.38} & \textbf{18.10}
& \textbf{11.63} & \textbf{14.77}
& \textbf{6.27} & \textbf{7.76}
& \textbf{4.49} & \textbf{6.30}\\
\bottomrule
\end{tabular}
\end{adjustbox}
\end{threeparttable}
\end{table*}

\subsection{Zero-shot transfer across model families}
We next check whether MPQ's perplexity gains carry over to downstream accuracy. Table~\ref{tab:zeroshot_summary} reports the five task average on Mistral 7B~\citep{Mistral7}, Qwen3 8B~\citep{qwen3}, Llama 3 8B~\citep{llama3}, Llama 2 13B, and Llama 2 70B~\citep{llama2}, covering both the 3-bit scalar and 2-bit E8P12 regimes. MPQ improves over the matched CD polished greedy baseline in every model and bit width, with gains ranging from 0.4 to 1.7 percentage points. Against the stronger QuantEase and OJBKQ baselines, MPQ is best or tied best in nine of ten model/bit width cells. The only exception is Mistral 7B at 3-bit scalar, where QuantEase leads by 0.2 points and MPQ ties OJBKQ for second best. At 2-bit E8P12, MPQ is the best method for all five models, improving over LDLQ+CD by 0.4 to 1.4 points. These results show that MPQ's advantage is not restricted to perplexity and remains visible on downstream multiple choice evaluation, especially in the more constrained 2-bit lattice regime.

\begin{table*}[t]
\centering
\caption{Average zero-shot accuracy in percentage points. Bold denotes the best quantized result within each model and bit width. Underlining denotes second best. Table~\ref{tab:full_zeroshot_avg} gives the detailed accuracy for each task.}
\label{tab:zeroshot_summary}
\begin{adjustbox}{width=0.9\textwidth,center}
\begin{threeparttable}
\footnotesize
\setlength{\tabcolsep}{3.4pt}
\renewcommand{\arraystretch}{1.13}
\begin{tabular}{@{}l c@{\hspace{0.5em}}ccccc@{\hspace{0.8em}}ccccc@{}}
\toprule
\multirow{2}{*}{\textbf{Model}} &
\multirow{2}{*}{\textbf{BF16}} &
\multicolumn{5}{c}{\textbf{3-bit scalar}} &
\multicolumn{5}{c}{\textbf{2-bit E8P12}} \\
\cmidrule(lr){3-7}\cmidrule(lr){8-12}
& &
{\scriptsize \textbf{GPTQ}} &
{\scriptsize \textbf{GPTQ+CD}} &
{\scriptsize \textbf{QuantEase}} &
{\scriptsize \textbf{OJBKQ}} &
{\scriptsize \textbf{MPQ}} &
{\scriptsize \textbf{LDLQ}} &
{\scriptsize \textbf{LDLQ+CD}} &
{\scriptsize \textbf{QuantEase}} &
{\scriptsize \textbf{OJBKQ}} &
{\scriptsize \textbf{MPQ}} \\
\midrule
Mistral\mbox{-}7B
& \tg{74.3}
& 71.2 & 71.2 & \textbf{72.1} & \underline{71.9} & \zgs{\underline{71.9}}
& 66.8 & \underline{67.5} & 66.1 & \underline{67.5} & \zgs{\textbf{67.9}} \\

Qwen3\mbox{-}8B
& \tg{73.5}
& 69.3 & 70.1 & 70.4 & \underline{71.0} & \zgs{\textbf{71.8}}
& \underline{68.1} & 67.2 & 63.5 & 66.5 & \zgs{\textbf{68.4}} \\

Llama3\mbox{-}8B
& \tg{73.0}
& 67.9 & 67.7 & 68.1 & \underline{68.5} & \zgs{\textbf{68.6}}
& 57.4 & \underline{59.3} & 55.3 & 59.2 & \zgs{\textbf{59.7}} \\

Llama2\mbox{-}13B
& \tg{71.6}
& 69.1 & 69.4 & 69.9 & \underline{70.0} & \zgs{\textbf{70.1}}
& 65.5 & 65.5 & 65.8 & \underline{66.0} & \zgs{\textbf{66.9}} \\

Llama2\mbox{-}70B
& \tg{76.7}
& \textbf{75.3} & 74.9 & 75.0 & \underline{75.2} & \zgs{\textbf{75.3}}
& 73.7 & 73.5 & \underline{74.2} & \underline{74.2} & \zgs{\textbf{74.6}} \\
\bottomrule
\end{tabular}
% \begin{tablenotes}[flushleft]
% \scriptsize
% \item We average ARC Challenge, ARC Easy, HellaSwag, PIQA, and Winogrande using \texttt{acc\_norm}.
% \end{tablenotes}
\end{threeparttable}
\end{adjustbox}
\end{table*}

\subsection{Solver scalability}
\label{sec:solver_scalability}

\paragraph{Timing and memory protocol.}
Following the experimental setup in Section~\ref{sec:setup}, we measure all solver-side timings on a single RTX 5090, excluding preprocessing, model export, and evaluation, so that the reported numbers reflect the cost of the solver itself. We use \texttt{torch.compile} and set the number of parallel blocks to 1024 throughout to speed up the time-consuming block CD. Peak memory is reported as the CUDA allocated peak during quantization.

\paragraph{Solver-time scaling across model sizes.}

\begin{wrapfigure}{r}{0.55\textwidth}
\vspace{-1em}
\centering
\includegraphics[width=1\linewidth]{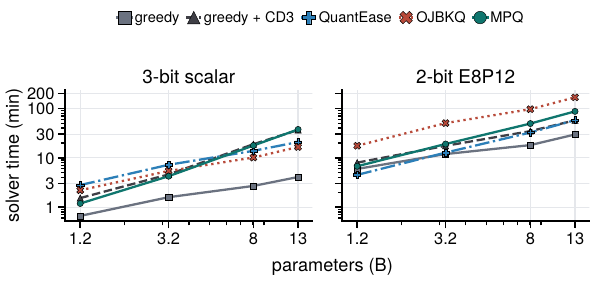}
\caption{Solver-time scaling across Llama model sizes.}
\label{fig:solver_scaling}
\vspace{-0.6em}
\end{wrapfigure}

Figure~\ref{fig:solver_scaling} traces solver-time scaling across four Llama-family scales from 1B to 13B parameters, with detailed numbers in Table~\ref{tab:solver_scaling_timing}. In the 3-bit scalar regime, MPQ matches GPTQ+CD3 across all four scales, but remains several times slower than the greedy GPTQ baseline. In the 2-bit E8P12 regime, this gap narrows substantially. MPQ stays within $3\times$ of LDLQ and is faster than the strongest baseline OJBKQ across all scales. These time and memory costs arise only during offline PTQ.

\paragraph{Speed-quality tradeoff on Llama-3-8B}

\begin{figure}[h]
\centering
\includegraphics[width=0.9\textwidth]{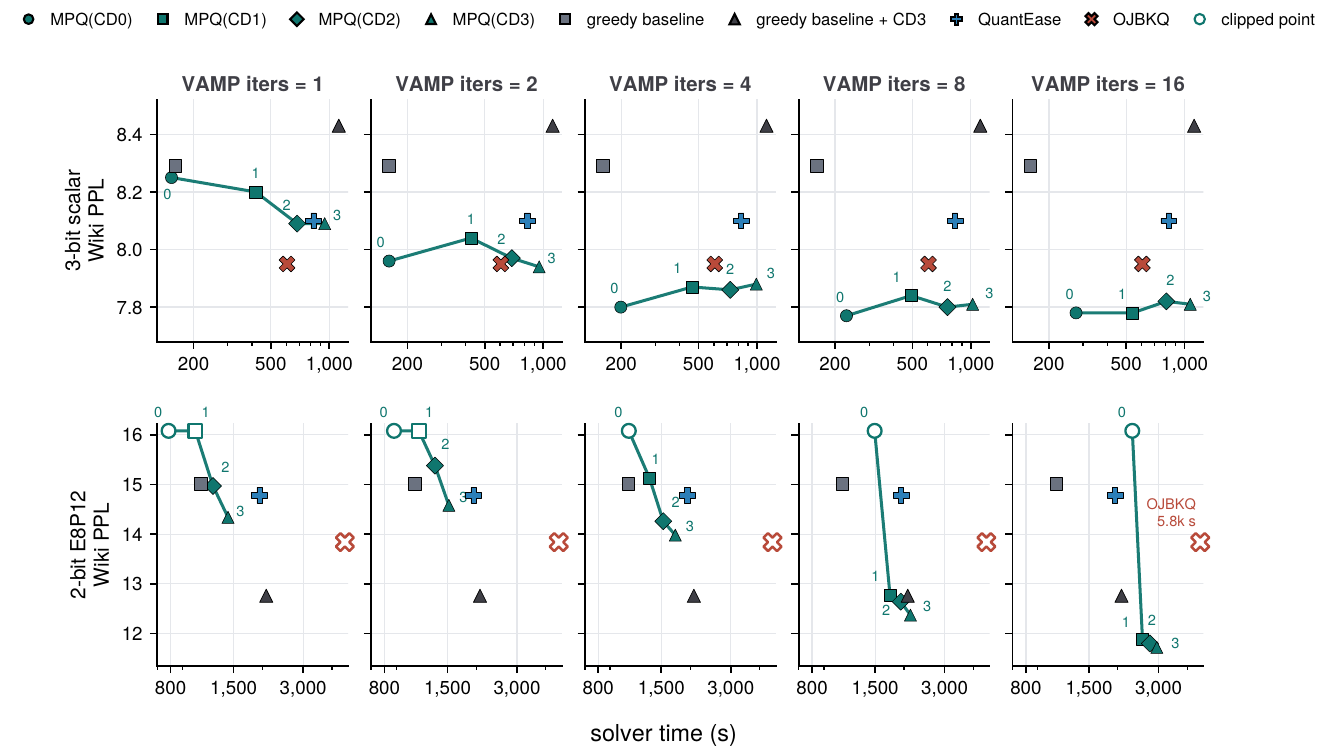}
\caption{Solver-side speed-quality tradeoff on Meta-Llama-3-8B.} %Columns fix VAMP iterations and markers trace CD budgets from none to 3 passes; hollow markers are clipped PPL value.}
\label{fig:speed_quality_main}
\vspace{-10pt}
\end{figure}

Figure~\ref{fig:speed_quality_main} fixes Meta-Llama-3-8B and shows how PPL changes with solver time as VAMP iterations vary across columns and CD passes across markers. Detailed numbers and memory usage are reported in Appendix~\ref{app:runtime}. In the 3-bit scalar regime, 4 VAMP iterations with no CD already reach 7.8 WikiText-2 PPL, below both OJBKQ and QuantEase, and extra CD sweeps barely move the curve. Users in this regime can drop CD entirely and recover most of the wall-clock budget reported in Figure~\ref{fig:solver_scaling}, since VAMP alone already saturates scalar quality. In the 2-bit E8P12 regime, VAMP-only configurations clip above 16 PPL, but each additional CD sweep pushes the operating point lower, reaching around 11.6 PPL at 16 VAMP iterations with three CD passes against LDLQ+CD3 at 12.76. The CD budget is therefore a quality dial that users can spend when targeting aggressive low-bit deployments.

\paragraph{Matched solver budgets.} On Llama-3-8B, MPQ with eight VAMP iterations and three CD sweeps reaches 12.370 WikiText-2 PPL in 2134 seconds, compared with 12.756 in 2076 seconds for LDLQ+CD3. With sixteen VAMP iterations and three CD sweeps, MPQ reaches 11.630 in 2951 seconds, compared with 12.728 in 3057 seconds for LDLQ+CD6 (Table~\ref{tab:matched_solver_budget}).

\paragraph{Deployment.} MPQ quantizes weights offline into GPTQ-compatible scalar or QuIP\# E8P12 formats. With matched grouping and packing, the quantized models use the corresponding inference kernels.

\subsection{Ablation and mechanism study}
\label{sec:mechanism}

Appendices~\ref{app:pilot} and~\ref{app:ablation} cover hyperparameter sensitivity and activation scaling. We next examine codebook choice, CD budget, and the precision and preprocessing diagnostics motivated by Section~\ref{sec:theory}.

\paragraph{Codebook and refinement controls.} With the 2-bit D4 codebook, MPQ has the lowest objective after one CD sweep and the lowest perplexity after three sweeps (Appendix~\ref{app:d4_controls}). With E8P12, MPQ has lower perplexity than LDLQ after 20 CD sweeps (Table~\ref{tab:extended_cd_controls}).

\begin{wrapfigure}{r}{0.55\textwidth}
\vspace{-0.6em}
\centering
\includegraphics[width=1\linewidth]{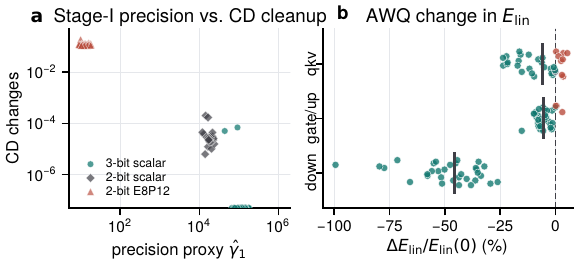}
\caption{Mechanism diagnostics on Meta-Llama-3-8B.}
\label{fig:mechanism_main}
\vspace{-0.8em}
\end{wrapfigure}

\paragraph{VAMP precision and CD cleanup.}
Figure~\ref{fig:mechanism_main}a connects the empirical gains to the warm-start picture in Theorem~\ref{thm:warmstart} by plotting the final VAMP precision proxy \(\hat{\gamma}_1\) against the fraction of assignments changed by CD after VAMP from each module. The 3-bit scalar regime sits at high precision with negligible cleanup, while the 2-bit scalar regime has much higher cleanup ratio. Moreover, the 2-bit E8P12 regime has much lower precision and higher cleanup ratio than 2-bit scalar regime. This pattern indicates that the message-passing stage is near terminal in 3-bit scalar regimes but mainly selects a basin in the harder 2-bit scalar and lattice codebook regime, leaving the precise codeword commitment to CD. Appendix~\ref{app:cleanupdiag} gives the per-model and greedy-baseline cleanup audits.

\paragraph{AWQ and the linear proxy.}
Figure~\ref{fig:mechanism_main}b tests whether AWQ moves the linear proxy in the direction predicted by Proposition~\ref{thm:linear_response} under the same AWQ+RHT scalar setup on Meta-Llama-3-8B. AWQ decreases \(E_{\mathrm{lin}}\) for most scale groups, with the few increases concentrated in \texttt{qkv} and \texttt{gate\_up}, and the sign of the leading-order integrand from Proposition~\ref{thm:linear_response} agrees with the observed change on nearly all groups.  Appendix~\ref{app:cleanupdiag} gives the full sign-count breakdown.

\section{Conclusion}
\label{sec:conclusion}
We presented MPQ, a solver-side approach to post-training quantization that casts row-wise  objective as posterior inference over discrete codebooks. This unified view explains classical greedy solvers as hard-decision limits, while exposing finite-temperature message-passing as a natural way to propagate codeword uncertainty across blocks before committing to a quantized solution. 
The benefit of postponing commitment is most visible in aggressive low-bit regimes, which are also the regimes that determine whether large language models can be deployed on commodity hardware. MPQ improves over strong solver baselines across these regimes, with the clearest gains at 2 bits. We believe message-passing inference is particularly well suited to this combinatorial problem, and will open a productive regime in the design space of low-bit quantization solvers.

% Building on this perspective, MPQ combines VAMP with exact local refinement, yielding stronger perplexity and zero-shot accuracy across scalar and lattice quantization regimes. Our analysis connects preprocessing and MAP projection accuracy through the effective VAMP precision, providing a qualitative interpretation for the message-passing mechanism. We believe that message-passing inference over the discrete codebook posterior will reshape the design space of low-bit quantization.

% Explicit boundary for the ten-page main text.
\label{cr:last-main-page}
\clearpage
\bibliographystyle{unsrt}
\bibliography{ref}

\newpage
\appendix

\onecolumn
\begin{center}
    {\LARGE \bf Appendix}
\end{center}

\section{Additional experimental details}
\label{appendix:experiments}
\providecommand{\wiki}{WikiText-2}
\providecommand{\cFour}{C4}
\providecommand{\vamp}{VAMP}
\providecommand{\cd}{CD}
\providecommand{\stagei}{Stage I}
\providecommand{\stageii}{Stage II}

\subsection{Pilot sweep for frozen \vamp{} hyperparameters}
\label{app:pilot}

Before freezing the defaults, we ran a small pilot sweep over the three scalar VAMP hyperparameters that most directly affect convergence: the number of VAMP iterations, the damping coefficient, and the anchoring multiplier $\tau$. Figure~\ref{fig:stability_appendix} summarizes the pilot together with the calibration-budget robustness check; hollow markers denote the selected defaults. The chosen setting \((T_{\mathrm{vamp}},\mathrm{damping},\tau)=(16,0.5,0.2)\) lies in the stable region of the sweep rather than at a brittle edge, and the Llama-3-8B 3-bit result remains stable as the calibration budget varies from 32 to 128 sequences with 3 different random seeds.

\begin{figure}[h]
\centering
\makebox[\textwidth][c]{\includegraphics[width=1\textwidth]{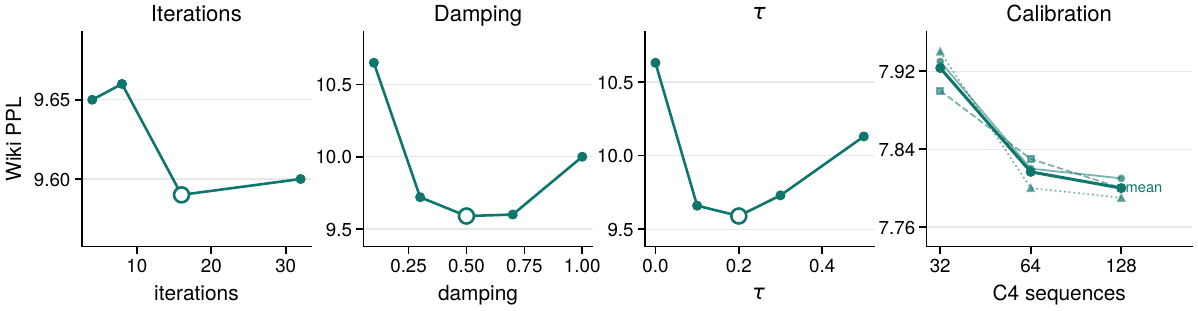}}
\caption{Pilot sweeps over VAMP iterations, damping, anchoring multiplier $\tau$, and calibration budget for Llama-3-8B 3-bit scalar quantization.}
\label{fig:stability_appendix}
\end{figure}

\begin{table}[!h]
\centering
\tiny
\setlength{\tabcolsep}{1.8pt}
\renewcommand{\arraystretch}{1.06}
\caption{Full perplexity matrix for the retained seven-model evaluation. Entries are WikiText-2 / C4 PPL.}
\label{tab:full_ppl_matrix}
\begin{adjustbox}{width=\textwidth}
\begin{tabular}{lccccccc}
\toprule
\textbf{Method}
& \textbf{Llama3.2-1B}
& \textbf{Llama3.2-3B}
& \textbf{Llama3-8B}
& \textbf{Llama2-13B}
& \textbf{Llama2-70B}
& \textbf{Qwen3-8B}
& \textbf{Mistral-7B} \\
\midrule
\rowcolor{tablegray}
BF16 & 9.75 / 12.77 & 7.82 / 10.47 & 6.14 / 8.93 & 4.89 / 6.47 & 3.32 / 5.52 & 6.97 / 10.19 & 5.32 / 7.84 \\
\midrule
\multicolumn{8}{@{}l}{\emph{4-bit scalar codebook, group size 128}} \\
GPTQ & 10.63 / 13.86 & 8.22 / 11.08 & 6.54 / 9.41 & 4.99 / 6.57 & 3.43 / 5.58 & 7.22 / 10.58 & 5.42 / 7.97 \\
GPTQ + CD & 10.57 / 13.83 & 8.24 / 11.10 & 6.55 / 9.42 & 4.99 / 6.57 & 3.43 / 5.58 & 7.26 / 10.59 & 5.43 / 7.96 \\
QuantEase & 10.47 / 13.80 & 8.18 / 11.03 & 6.50 / 9.37 & 4.98 / 6.57 & 3.43 / 5.58 & 7.19 / 10.54 & \textbf{5.41} / 7.95 \\
OJBKQ & 10.43 / 13.68 & 8.15 / 10.99 & 6.48 / 9.35 & 4.97 / 6.56 & 3.42 / 5.58 & \textbf{7.16} / 10.53 & \textbf{5.41} / \textbf{7.94} \\
MPQ & \textbf{10.38 / 13.63} & \textbf{8.12 / 10.94} & \textbf{6.46 / 9.33} & \textbf{4.96 / 6.55} & \textbf{3.41 / 5.57} & \textbf{7.16 / 10.51} & \textbf{5.41 / 7.94} \\
\midrule
\multicolumn{8}{@{}l}{\emph{3-bit scalar codebook, group size 128}} \\
GPTQ & 15.45 / 19.25 & 10.56 / 13.76 & 8.27 / 11.59 & 5.46 / 7.09 & 3.93 / 5.89 & 8.19 / 11.53 & 5.94 / 8.56 \\
GPTQ + CD & 15.58 / 18.94 & 10.51 / 13.82 & 8.37 / 11.60 & 5.46 / 7.09 & 3.92 / 5.90 & 8.21 / 11.56 & 5.94 / 8.54 \\
QuantEase & 14.86 / 18.51 & 10.05 / 13.36 & 8.10 / 11.24 & 5.39 / 6.95 & 3.88 / 5.88 & 8.03 / 11.29 & 5.82 / 8.39 \\
OJBKQ & 14.26 / 18.00 & 9.93 / 13.22 & 7.95 / 11.08 & 5.40 / 6.98 & 3.87 / 5.86 & 7.89 / 11.24 & 5.81 / 8.36 \\
MPQ & \textbf{13.65 / 17.42} & \textbf{9.65 / 12.87} & \textbf{7.81 / 10.88} & \textbf{5.30 / 6.89} & \textbf{3.78 / 5.80} & \textbf{7.80 / 11.13} & \textbf{5.80 / 8.35} \\
\midrule
\multicolumn{8}{@{}l}{\emph{2-bit E8P12 codebook}} \\
LDLQ & 42.80 / 43.47 & 18.54 / 21.15 & 15.01 / 18.17 & 6.63 / 8.17 & 4.58 / 6.36 & 9.54 / 12.78 & 7.81 / 10.53 \\
LDLQ + CD & 41.79 / 42.69 & 18.00 / 20.66 & 12.76 / 15.96 & 6.61 / 8.13 & 4.59 / 6.36 & 9.41 / 12.76 & 6.98 / 9.53 \\
QuantEase & 47.31 / 46.24 & 18.84 / 21.17 & 14.78 / 17.77 & 6.67 / 8.19 & 4.70 / 6.46 & 10.55 / 13.80 & 7.48 / 10.10 \\
OJBKQ & 39.28 / 39.72 & 17.55 / 19.94 & 13.84 / 16.57 & 6.55 / 8.11 & 4.58 / 6.36 & 9.43 / 12.81 & 6.99 / 9.53 \\
MPQ & \textbf{31.67 / 31.48} & \textbf{15.38 / 18.10} & \textbf{11.63 / 14.77} & \textbf{6.27 / 7.76} & \textbf{4.49 / 6.30} & \textbf{9.19 / 12.57} & \textbf{6.75 / 9.38} \\

\bottomrule
\end{tabular}
\end{adjustbox}
\vspace{0.8em}
\end{table}
\subsection{Full perplexity matrix}
\label{app:full_ppl}

Table~\ref{tab:full_ppl_matrix} expands the main perplexity comparison to the complete seven-model set used in the paper: five Llama-family models, Qwen3-8B, and Mistral-7B. Each entry is WikiText-2 / C4 perplexity, and MPQ denotes the final CD-polished configuration. The table section headers specify the codebook and preprocessing regime.

\subsection{Full zero-shot task matrix}
\label{app:full_zeroshot}

Table~\ref{tab:full_zeroshot_avg} reports the retained zero-shot task matrix. Bold marks the best quantized result within each model, bit-width, and task; underlining marks the second-best.

\begin{table}[H]
\centering
\scriptsize
\setlength{\tabcolsep}{4.0pt}
\renewcommand{\arraystretch}{1.06}
\caption{Full zero-shot task matrix for the retained models.}
\label{tab:full_zeroshot_avg}
\begin{adjustbox}{width=0.9\textwidth}
\begin{tabular}{llcccccc}
\toprule
\textbf{Model / Bits} & \textbf{Method} & \textbf{ARC-C} & \textbf{ARC-E} & \textbf{Hella} & \textbf{PIQA} & \textbf{Wino} & \textbf{Avg} \\
\midrule
\multicolumn{8}{@{}l}{\emph{Llama-3-8B}} \\
BF16 reference & \tg{BF16} & \tg{54.1} & \tg{77.6} & \tg{79.2} & \tg{80.6} & \tg{73.6} & \tg{73.0} \\
3-bit scalar & GPTQ & 44.5 & \underline{72.2} & 74.3 & \textbf{77.4} & \textbf{70.9} & 67.9 \\
3-bit scalar & GPTQ + CD & 45.1 & 72.0 & \underline{74.5} & \textbf{77.4} & 69.6 & 67.7 \\
3-bit scalar & QuantEase & 46.5 & 70.5 & 75.0 & \underline{77.3} & 71.2 & 68.1 \\
3-bit scalar & OJBKQ & \textbf{47.9} & 72.4 & 75.2 & 76.8 & 70.3 & \underline{68.5} \\
3-bit scalar & MPQ & \underline{47.6} & \textbf{73.4} & \textbf{75.5} & 76.1 & \underline{70.5} & \textbf{68.6} \\
2-bit E8P12 & LDLQ & 34.6 & 57.4 & 59.8 & 70.9 & 64.4 & 57.4 \\
2-bit E8P12 & LDLQ + CD & \underline{35.6} & \textbf{60.9} & 63.0 & \underline{72.1} & 64.7 & \underline{59.3} \\
2-bit E8P12 & QuantEase & 30.8 & 53.9 & 59.1 & 69.9 & 62.9 & 55.3 \\
2-bit E8P12 & OJBKQ & \underline{35.6} & \underline{59.2} & \underline{64.4} & 71.4 & \underline{65.2} & 59.2 \\
2-bit E8P12 & MPQ & \textbf{35.8} & 58.6 & \textbf{64.5} & \textbf{72.9} & \textbf{66.9} & \textbf{59.7} \\
\midrule
\multicolumn{8}{@{}l}{\emph{Qwen3-8B}} \\
BF16 reference & \tg{BF16} & \tg{56.9} & \tg{79.8} & \tg{78.7} & \tg{79.5} & \tg{72.8} & \tg{73.5} \\
3-bit scalar & GPTQ & 49.1 & 75.2 & 74.7 & \underline{78.5} & 69.0 & 69.3 \\
3-bit scalar & GPTQ + CD & 51.7 & 76.4 & 75.0 & 78.2 & 69.4 & 70.1 \\
3-bit scalar & QuantEase & \underline{52.8} & 77.4 & 74.9 & 77.8 & 69.2 & 70.4 \\
3-bit scalar & OJBKQ & 51.4 & \underline{78.6} & \underline{75.3} & \textbf{78.8} & \textbf{71.2} & \underline{71.0} \\
3-bit scalar & MPQ & \textbf{54.4} & \textbf{79.6} & \textbf{75.9} & 78.1 & \underline{70.8} & \textbf{71.8} \\
2-bit E8P12 & LDLQ & \textbf{49.0} & \underline{76.2} & 69.4 & \underline{76.7} & 69.3 & \underline{68.1} \\
2-bit E8P12 & LDLQ + CD & 47.3 & 72.8 & \underline{70.0} & 76.5 & \underline{69.5} & 67.2 \\
2-bit E8P12 & QuantEase & 40.5 & 66.0 & 67.8 & 75.9 & 67.5 & 63.5 \\
2-bit E8P12 & OJBKQ & 45.0 & 72.1 & \textbf{70.3} & 76.2 & 69.1 & 66.5 \\
2-bit E8P12 & MPQ & \underline{48.7} & \textbf{76.6} & \underline{70.0} & \textbf{77.0} & \textbf{69.6} & \textbf{68.4} \\
\midrule
\multicolumn{8}{@{}l}{\emph{Mistral-7B}} \\
BF16 reference & \tg{BF16} & \tg{54.3} & \tg{80.1} & \tg{80.7} & \tg{82.0} & \tg{74.3} & \tg{74.3} \\
3-bit scalar & GPTQ & 49.6 & 75.4 & 78.0 & 81.0 & \underline{72.1} & 71.2 \\
3-bit scalar & GPTQ + CD & 50.0 & 75.3 & 77.3 & \textbf{81.6} & 71.9 & 71.2\\
3-bit scalar & QuantEase & \textbf{51.7} & \textbf{77.9} & 78.3 & 80.7 & 72.0 & \textbf{72.1} \\
3-bit scalar & OJBKQ & \underline{50.7} & \underline{76.8} & \textbf{79.0} & \underline{81.1} & 72.0 & \underline{71.9} \\
3-bit scalar & MPQ & 50.5 & 76.2 & \underline{78.6} & 81.0 & \textbf{73.2} & \underline{71.9} \\
2-bit E8P12 & LDLQ & 41.5 & 70.1 & \textbf{73.7} & 77.6 & \underline{71.3} & 66.8 \\
2-bit E8P12 & LDLQ + CD & 43.1 & 70.8 & \underline{73.4} & \underline{78.8} & \textbf{71.4} & \underline{67.5} \\
2-bit E8P12 & QuantEase & 42.6 & 70.0 & 70.6 & 78.1 & 69.0 & 66.1 \\
2-bit E8P12 & OJBKQ & \underline{43.5} & \textbf{72.2} & \underline{73.4} & 78.0 & 70.6 & \underline{67.5} \\
2-bit E8P12 & MPQ & \textbf{43.8} & \underline{71.3} & \underline{73.4} & \textbf{79.8} & 71.1 & \textbf{67.9} \\

\bottomrule
\end{tabular}
\end{adjustbox}
\end{table}

\begin{table}[H]
\centering
\scriptsize
\setlength{\tabcolsep}{4.0pt}
\renewcommand{\arraystretch}{0.98}
\vspace{-1.6em}
\addtocounter{table}{-1}
\renewcommand{\thetable}{\arabic{table} continued}
\caption{Full zero-shot task matrix for the retained models.}
\renewcommand{\thetable}{\arabic{table}}
\begin{adjustbox}{width=0.9\textwidth}
\begin{tabular}{llcccccc}
\toprule
\textbf{Model / Bits} & \textbf{Method} & \textbf{ARC-C} & \textbf{ARC-E} & \textbf{Hella} & \textbf{PIQA} & \textbf{Wino} & \textbf{Avg} \\
\midrule
% \multicolumn{8}{@{}l}{\emph{Mistral-7B}} \\
% BF16 reference & \tg{BF16} & \tg{54.3} & \tg{80.1} & \tg{80.7} & \tg{82.0} & \tg{74.3} & \tg{74.3} \\
% 3-bit scalar & GPTQ & 49.6 & 75.4 & 78.0 & 81.0 & \underline{72.1} & 71.2 \\
% 3-bit scalar & GPTQ + CD & 50.0 & 75.3 & 77.3 & \textbf{81.6} & 71.9 & 71.2\\
% 3-bit scalar & QuantEase & \textbf{51.7} & \textbf{77.9} & 78.3 & 80.7 & 72.0 & \textbf{72.1} \\
% 3-bit scalar & OJBKQ & \underline{50.7} & \underline{76.8} & \textbf{79.0} & \underline{81.1} & 72.0 & \underline{71.9} \\
% 3-bit scalar & MPQ & 50.5 & 76.2 & \underline{78.6} & 81.0 & \textbf{73.2} & \underline{71.9} \\
% 2-bit E8P12 & LDLQ & 41.5 & 70.1 & \textbf{73.7} & 77.6 & \underline{71.3} & 66.8 \\
% 2-bit E8P12 & LDLQ + CD & 43.1 & 70.8 & \underline{73.4} & \underline{78.8} & \textbf{71.4} & \underline{67.5} \\
% 2-bit E8P12 & QuantEase & 42.6 & 70.0 & 70.6 & 78.1 & 69.0 & 66.1 \\
% 2-bit E8P12 & OJBKQ & \underline{43.5} & \textbf{72.2} & \underline{73.4} & 78.0 & 70.6 & \underline{67.5} \\
% 2-bit E8P12 & MPQ & \textbf{43.8} & \underline{71.3} & \underline{73.4} & \textbf{79.8} & 71.1 & \textbf{67.9} \\
% \midrule
\multicolumn{8}{@{}l}{\emph{Llama-2-13B}} \\
BF16 reference & \tg{BF16} & \tg{48.8} & \tg{76.5} & \tg{79.6} & \tg{80.4} & \tg{72.5} & \tg{71.6} \\
3-bit scalar & GPTQ & 45.5 & 73.0 & 76.5 & 79.2 & \underline{71.4} & 69.1 \\
3-bit scalar & GPTQ + CD & 46.1 & \underline{73.7} & 76.7 & 79.1 & 71.2 & 69.4 \\
3-bit scalar & QuantEase & \underline{47.5} & 73.3 & \underline{77.5} & 79.5 & \textbf{71.8} & 69.9 \\
3-bit scalar & OJBKQ & 47.2 & 74.5 & 77.1 & \textbf{80.5} & 70.7 & \underline{70.0} \\
3-bit scalar & MPQ & \textbf{47.8} & 73.5 & \textbf{77.9} & \underline{80.1} & 71.0 & \textbf{70.1} \\
2-bit E8P12 & LDLQ & 41.1 & 68.9 & 72.1 & 76.8 & 68.6 & 65.5 \\
2-bit E8P12 & LDLQ + CD & 40.4 & 68.1 & 72.0 & \underline{77.1} & \underline{69.9} & 65.5 \\
2-bit E8P12 & QuantEase & \underline{41.9} & 69.1 & 71.1 & \textbf{77.9} & 69.1 & 65.8 \\
2-bit E8P12 & OJBKQ & 41.6 & \underline{70.0} & \underline{72.3} & 77.0 & 69.2 & \underline{66.0} \\
2-bit E8P12 & MPQ & \textbf{43.0} & \textbf{71.0} & \textbf{73.0} & \underline{77.1} & \textbf{70.3} & \textbf{66.9} \\
\midrule
\multicolumn{8}{@{}l}{\emph{Llama-2-70B}} \\
BF16 reference & \tg{BF16} & \tg{53.9} & \tg{82.6} & \tg{84.4} & \tg{82.5} & \tg{80.0} & \tg{76.7} \\
3-bit scalar & GPTQ & \underline{53.7} & \textbf{79.1} & 82.5 & \textbf{82.3} & \underline{79.1} & \textbf{75.3} \\
3-bit scalar & GPTQ + CD & 53.5 & 77.1 & 82.5 & \underline{82.2} & \textbf{79.2} & 74.9 \\
3-bit scalar & QuantEase & 53.6 & 78.5 & \underline{83.1} & 81.6 & 78.4 & 75.0 \\
3-bit scalar & OJBKQ & \underline{53.7} & \underline{78.6} & 83.0 & \textbf{82.3} & 78.6 & \underline{75.2} \\
3-bit scalar & MPQ & \textbf{53.9} & 78.2 & \textbf{83.6} & 82.1 & 78.6 & \textbf{75.3} \\
2-bit E8P12 & LDLQ & 51.8 & 77.2 & \underline{80.5} & 81.0 & 77.9 & 73.7 \\
2-bit E8P12 & LDLQ + CD & 51.0 & 77.1 & \underline{80.5} & 81.1 & 77.6 & 73.5 \\
2-bit E8P12 & QuantEase & 53.2 & 78.7 & 80.2 & 80.8 & \textbf{78.3} & \underline{74.2} \\
2-bit E8P12 & OJBKQ & \underline{53.4} & \underline{78.2} & 80.4 & \underline{81.4} & 77.4 & \underline{74.2} \\
2-bit E8P12 & MPQ & \textbf{53.8} & \textbf{78.7} & \textbf{80.8} & \textbf{81.6} & \underline{78.0} & \textbf{74.6} \\
\bottomrule
\end{tabular}
\end{adjustbox}
\end{table}

\subsection{Ablation study}
\label{app:ablation}

Figure~\ref{fig:ablation_appendix} shows the MPQ recipe variants on the middle size 8B/7B models, while Table~\ref{tab:preprocess_ablation} tests how AWQ affects MPQ solvers under the same RHT convention.

\vspace{-0.75em}
\begin{figure}[H]
\centering
\includegraphics[width=\textwidth]{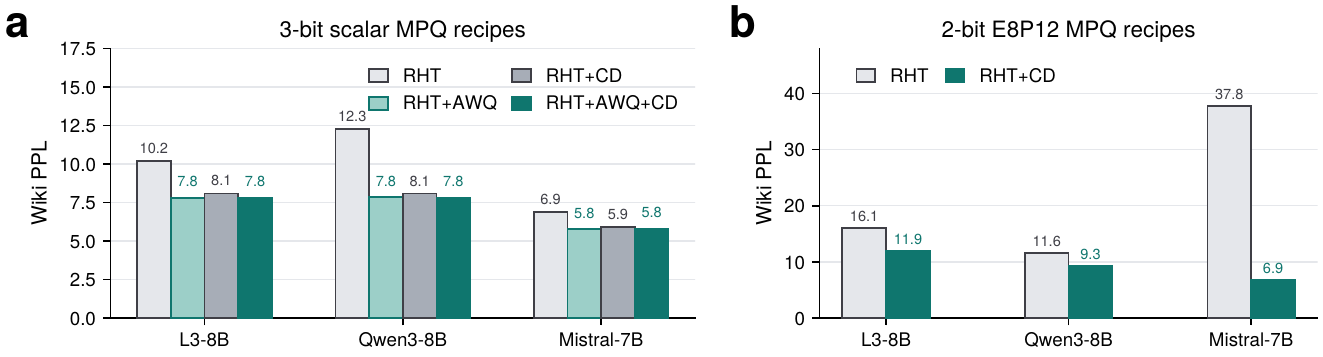}
\caption{MPQ recipe variants on WikiText-2 for the 8B/7B models. \textbf{a} Performance comparison of 3-bit scalar preprocessing and CD choices. \textbf{b} Performance comparison of 2-bit E8P12 VAMP and CD-polished recipes. AWQ preprocessing intentionally manipulates the per-channel activation magnitude which would disrupt the lattice codebook’s block-level isotropy and make quantization fail. Therefore, AWQ is not inlcuded in the 2-bit E8P12 setting for preprocessing ablation study.} 
\label{fig:ablation_appendix}
\end{figure}

\vspace{-0.75em}
\begin{table}[h]
\centering
\scriptsize
\setlength{\tabcolsep}{3.5pt}
\caption{3-bit scalar preprocessing and CD ablation on WikiText-2 / C4 perplexity. Delta denotes AWQ+RHT minus RHT-only.}
\label{tab:preprocess_ablation}
\begin{adjustbox}{width=0.9\textwidth}
\begin{tabular}{llccc}
\toprule
\textbf{Model} & \textbf{Method} & \textbf{RHT only} & \textbf{AWQ + RHT} & \textbf{Delta} \\
\midrule
\multirow{2}{*}{Llama3.2-1B}
% & GPTQ & 15.23 / 18.96 & 15.45 / 19.25 & +0.22 / +0.29 \\
% & GPTQ + CD & 14.95 / 18.61 & 15.58 / 18.94 & +0.63 / +0.33 \\
& MPQ (VAMP-only) & 31.24 / 36.36 & 13.76 / 17.71 & -17.48 / -18.65 \\
& MPQ & 14.55 / 18.33 & 13.65 / 17.42 & -0.90 / -0.91 \\
\midrule
\multirow{2}{*}{Llama3.2-3B}
% & GPTQ & 10.43 / 13.73 & 10.56 / 13.76 & +0.13 / +0.03 \\
% & GPTQ + CD & 10.47 / 13.77 & 10.51 / 13.82 & +0.04 / +0.05 \\
& MPQ (VAMP-only) & 14.76 / 18.81 & 9.59 / 12.92 & -5.17 / -5.89 \\
& MPQ & 10.07 / 13.33 & 9.65 / 12.87 & -0.42 / -0.46 \\
\midrule
\multirow{2}{*}{Llama3-8B}
% & GPTQ & 8.22 / 11.51 & 8.29 / 11.61 & +0.07 / +0.10 \\
% & GPTQ + CD & 8.31 / 11.52 & 8.43 / 11.63 & +0.12 / +0.11 \\
& MPQ (VAMP-only) & 10.18 / 13.72 & 7.80 / 10.96 & -2.38 / -2.76 \\
& MPQ & 8.09 / 11.24 & 7.81 / 10.88 & -0.28 / -0.36 \\
\midrule
\multirow{2}{*}{Qwen3-8B}
% & GPTQ & 8.11 / 11.42 & 8.39 / 11.88 & +0.28 / +0.46 \\
% & GPTQ + CD & 8.10 / 11.40 & 8.40 / 11.86 & +0.30 / +0.46 \\
& MPQ (VAMP-only) & 12.27 / 17.77 & 7.85 / 11.18 & -4.42 / -6.59 \\
& MPQ & 8.08 / 11.37 & 7.80 / 11.13 & -0.28 / -0.24 \\
\midrule
\multirow{2}{*}{Mistral-7B}
% & GPTQ & 5.94 / 8.51 & 5.94 / 8.56 & +0.00 / +0.05 \\
% & GPTQ + CD & 5.94 / 8.50 & 5.94 / 8.54 & +0.00 / +0.04 \\
& MPQ (VAMP-only) & 6.87 / 9.96 & 5.78 / 8.37 & -1.09 / -1.59 \\
& MPQ & 5.89 / 8.49 & 5.80 / 8.35 & -0.09 / -0.15 \\
\bottomrule
\end{tabular}
\end{adjustbox}
\end{table}

\subsection{Speed-quality tradeoff and memory}
\label{app:runtime}

We measure solver-side cost on a single RTX 5090 and exclude Hessian construction, evaluation, and export from the timings. Figure~\ref{fig:solver_scaling} plots scaling against exact model parameter counts, and Table~\ref{tab:solver_scaling_timing} reports the same solver times in minutes.

Tables~\ref{tab:runtime_memory_3bit} and~\ref{tab:runtime_memory_2bit} give the full Meta-Llama-3-8B grids corresponding to Figure~\ref{fig:speed_quality_main}.
\begin{table}[h]
\centering
\small
\setlength{\tabcolsep}{5pt}
\caption{Solver-time scaling on a single RTX 5090.}
\label{tab:solver_scaling_timing}
\begin{adjustbox}{width=0.9\textwidth}
\begin{tabular}{llcccc}
\toprule
\textbf{Bits} & \textbf{Method} & \textbf{Llama3.2-1B} & \textbf{Llama3.2-3B} & \textbf{Llama3-8B} & \textbf{Llama2-13B} \\
\midrule
\multirow{3}{*}{3-bit} & GPTQ & 0.7 & 1.6 & 2.7 & 4.1 \\
& GPTQ+CD3 & 1.5 & 4.6 & 18.8 & 36.5 \\
& QuantEase & 2.8 & 7.2 & 13.8 & 20.8 \\
& OJBKQ & 2.2 & 5.4 & 10.1 & 16.5 \\
& MPQ & 1.2 & 4.2 & 17.7 & 37.4 \\
\midrule
\multirow{3}{*}{2-bit} & LDLQ & 5.9 & 11.9 & 18.0 & 29.6 \\
& LDLQ+CD3 & 8.0 & 17.4 & 34.6 & 57.2 \\
& QuantEase & 4.5 & 12.7 & 32.4 & 57.6 \\
& OJBKQ & 17.5 & 50.3 & 96.1 & 166.9 \\
& MPQ & 6.7 & 19.0 & 49.2 & 86.9 \\
\bottomrule
\end{tabular}
\end{adjustbox}
\end{table}

\begin{table}[h]
\centering
\scriptsize
\setlength{\tabcolsep}{3pt}
\caption{3-bit scalar codebook speed-quality grid on Meta-Llama-3-8B.}
\label{tab:runtime_memory_3bit}
\begin{threeparttable}
\begin{adjustbox}{width=0.9\textwidth}
\begin{tabular}{lcccc}
\toprule
\textbf{Method} & \textbf{\wiki{}} & \textbf{\cFour{}} & \textbf{Solver time (s)} & \textbf{Memory peak (GB)} \\
\midrule
GPTQ & 8.29 & 11.61 & 161 & 8.21 \\
GPTQ + CD3 & 8.43 & 11.63 & 1128 & 13.20 \\
QuantEase & 8.10 & 11.24 & 829 & 17.93 \\
OJBKQ & 7.95 & 11.03 & 606 & 15.84 \\
\midrule
MPQ-1 (VAMP-only) & 8.25 & 11.61 & 154 & 12.61 \\
MPQ-2 (VAMP-only) & 7.96 & 11.20 & 161 & 13.99 \\
MPQ-4 (VAMP-only) & 7.80 & 11.00 & 199 & 13.37 \\
MPQ-8 (VAMP-only) & 7.77 & 10.95 & 229 & 13.29 \\
MPQ-16 (VAMP-only) & 7.78 & 10.94 & 276 & 13.06 \\
\midrule
MPQ-1 + CD1 & 8.20 & 11.39 & 418 & 13.63 \\
MPQ-2 + CD1 & 8.04 & 11.14 & 425 & 13.52 \\
MPQ-4 + CD1 & 7.87 & 11.02 & 463 & 13.65 \\
MPQ-8 + CD1 & 7.84 & 10.94 & 493 & 13.44 \\
MPQ-16 + CD1 & 7.78 & 10.91 & 540 & 13.57 \\
\midrule
MPQ-1 + CD2 & 8.09 & 11.25 & 683 & 13.09 \\
MPQ-2 + CD2 & 7.97 & 11.11 & 690 & 13.30 \\
MPQ-4 + CD2 & 7.86 & 11.00 & 728 & 13.22 \\
MPQ-8 + CD2 & 7.80 & 10.95 & 758 & 14.30 \\
MPQ-16 + CD2 & 7.82 & 10.94 & 805 & 13.67 \\
\midrule
MPQ-1 + CD3 & 8.09 & 11.24 & 947 & 13.40 \\
MPQ-2 + CD3 & 7.94 & 11.08 & 954 & 13.50 \\
MPQ-4 + CD3 & 7.88 & 11.00 & 992 & 13.52 \\
MPQ-8 + CD3 & 7.81 & 10.92 & 1022 & 13.00 \\
MPQ-16 + CD3 & 7.81 & 10.87 & 1069 & 13.55 \\
\bottomrule
\end{tabular}
\end{adjustbox}
\end{threeparttable}
\end{table}

\begin{table}[h]
\centering
\scriptsize
\setlength{\tabcolsep}{3pt}
\caption{2-bit E8P12 speed-quality grid on Meta-Llama-3-8B.}
\label{tab:runtime_memory_2bit}
\begin{adjustbox}{width=0.9\textwidth}
\begin{tabular}{lcccc}
\toprule
\textbf{Method} & \textbf{\wiki{}} & \textbf{\cFour{}} & \textbf{Solver time (s)} & \textbf{Memory peak (GB)} \\
\midrule
LDLQ & 15.01 & 18.17 & 1082 & 8.45 \\
LDLQ + CD3 & 12.76 & 15.96 & 2076 & 17.49 \\
QuantEase & 14.78 & 17.77 & 1944 & 18.66 \\
OJBKQ & 13.84 & 16.57 & 5768 & 21.49 \\
\midrule
MPQ-1 (VAMP-only) & 321.12 & 280.36 & 784 & 22.63 \\
MPQ-1 + CD1 & 16.69 & 19.63 & 1024 & 22.63 \\
MPQ-1 + CD2 & 14.97 & 18.27 & 1221 & 22.63 \\
MPQ-1 + CD3 & 14.34 & 17.64 & 1418 & 22.63 \\
MPQ-2 (VAMP-only) & 742.85 & 502.59 & 880 & 22.63 \\
MPQ-2 + CD1 & 17.66 & 21.01 & 1127 & 22.86 \\
MPQ-2 + CD2 & 15.38 & 18.50 & 1324 & 22.86 \\
MPQ-2 + CD3 & 14.58 & 17.87 & 1521 & 22.86 \\
MPQ-4 (VAMP-only) & 590.20 & 446.48 & 1085 & 22.86 \\
MPQ-4 + CD1 & 15.13 & 18.22 & 1333 & 22.86 \\
MPQ-4 + CD2 & 14.26 & 17.26 & 1530 & 22.86 \\
MPQ-4 + CD3 & 13.98 & 16.93 & 1725 & 22.86 \\
MPQ-8 (VAMP-only) & 241.49 & 153.71 & 1493 & 22.86 \\
MPQ-8 + CD1 & 12.77 & 16.18 & 1740 & 22.86 \\
MPQ-8 + CD2 & 12.64 & 16.07 & 1937 & 22.86 \\
MPQ-8 + CD3 & 12.37 & 15.77 & 2134 & 22.86 \\
MPQ-16 (VAMP-only) & 17.24 & 21.01 & 2311 & 22.86 \\
MPQ-16 + CD1 & 11.89 & 14.99 & 2558 & 22.86 \\
MPQ-16 + CD2 & 11.80 & 14.81 & 2754 & 22.86 \\
MPQ-16 + CD3 & 11.63 & 14.77 & 2951 & 22.86 \\
\bottomrule
\end{tabular}
\end{adjustbox}
\end{table}

\subsection{Theory-guided cleanup diagnostic}
\label{app:cleanupdiag}
\paragraph{VAMP precision and cleanup.}
The VAMP cleanup diagnostic summarized in Figure~\ref{fig:mechanism_main} was run on the full Meta-Llama-3-8B model. In 3-bit scalar quantization, the median module-level final precision proxy \(\hat{\gamma}_1\) is \(1.77\times10^5\) over 224 modules, the median cleanup fraction is zero, and the global cleanup fraction is \(6.44\times10^{-7}\). In 2-bit E8P12, the median \(\hat{\gamma}_1\) drops to \(1.30\times10^1\) over 128 fused-module checkpoints, the median cleanup fraction rises to \(1.27\times10^{-1}\), and the global cleanup fraction is \(1.29\times10^{-1}\). This result supports the mechanism that when VAMP is sharp, CD is nearly a no-op; when the codebook is harder, CD performs substantial local repair.

\FloatBarrier

\paragraph{AWQ \(E_{\mathrm{lin}}\) diagnostic.}
We also checked the sign of the linearized error proxy from Eq.~\eqref{eq:Elin_derivative_two} (equivalently, the integrand of Corollary~\ref{cor:coordinate_response_elin}, Eq.~\eqref{eq:coordinate_response_elin}) on Meta-Llama-3-8B. To match the un-rotated path of Proposition~\ref{thm:linear_response} directly, we work in the RHT basis throughout: we apply RHT to the model, collect the calibration Hessian $H_0$ in the rotated basis from 128 C4 sequences of length 2048, and run GPTQModel AWQ scale search on the same rotated model with group size 128 to obtain $D=\operatorname{diag}(1/s)$. For each AWQ scale group we compute $E_0=\tfrac{1}{n}\operatorname{tr}[(\beta_0 H_0+\gamma I)^{-1}]$, $E_1=\tfrac{1}{n}\operatorname{tr}[(\beta_0 D H_0 D+\gamma I)^{-1}]$ and the difference $\Delta E_{\mathrm{lin}}=E_1-E_0$ at $\beta_0=\gamma=1$, together with the $t=0$ integrand. By Lemma~\ref{lem:orthogonal_invariance_elin}, this matches the production pipeline (AWQ before RHT) up to orthogonal similarity. AWQ decreased \(E_{\mathrm{lin}}\) on 84/96 unique AWQ scale groups, increased it on 11/96 groups, and was numerically zero on one of the 96 groups. Figure~\ref{fig:mechanism_main}b shows the distribution of relative changes. The module-group breakdown is simple, with all \texttt{down\_proj} groups decreasing and the few increases concentrated in \texttt{qkv} and \texttt{gate\_up}. The sign of the \(t=0\) integrand matched the observed \(\Delta E_{\mathrm{lin}}\), after accounting for the theorem's leading minus sign, on 95/96 groups, with the only mismatch having \(|\Delta E_{\mathrm{lin}}|\approx 7.5\times10^{-11}\). The largest decreases occur in early \texttt{down\_proj} groups, while all positive absolute changes are below \(8\times10^{-7}\). This supports the qualitative remark that AWQ usually lowers the linear-module proxy in the AWQ+RHT setting.

This diagnostic makes the effect of CD more clearer. In the 3-bit scalar GPTQ setting, CD changes only a small fraction of final scalar assignments and gives a modest surrogate drop, matching the small and noisy PPL movement. In the 2-bit E8P12 LDLQ setting, CD performs real local repair and improves PPL substantially, but it still cannot match MPQ, suggesting that the remaining gap is a VAMP basin-selection issue rather than simply insufficient CD sweeps.

% Additional tables transcribed from the author rebuttal; source IDs retained for review.

\FloatBarrier

\subsection{Calibration-resampling robustness}

\label{app:calibration_resampling}

We independently sample three C4 calibration sets, each containing 128 sequences of length 2048. Within each draw, all methods use the same calibration data, preprocessing, evaluation tokens, and quantization protocol. We report the mean and sample standard deviation of final WikiText-2 and C4 perplexity. The scalar codebook uses group size 128 and AWQ+RHT preprocessing.

% Source: YjGm reply p8JeaH44v5, table AX 367

\begin{table}[htbp]
\centering
\small
\caption{Perplexity of 4-bit scalar quantization over three calibration draws (mean $\pm$ sample SD).}
\label{tab:resampling_4bit}
\begin{tabular}{llrr}
\toprule
Model & Method & WikiText-2 & C4 \\
\midrule
Llama-3.2-1B & GPTQ+CD3 & $10.625 \pm 0.028$ & $13.856 \pm 0.017$ \\
 & QuantEase & $10.474 \pm 0.014$ & $13.827 \pm 0.033$ \\
 & OJBKQ & $10.456 \pm 0.023$ & $13.672 \pm 0.022$ \\
 & MPQ & $10.392 \pm 0.010$ & $13.629 \pm 0.017$ \\
\midrule
Llama-3.2-3B & GPTQ+CD3 & $8.264 \pm 0.013$ & $11.109 \pm 0.010$ \\
 & QuantEase & $8.202 \pm 0.006$ & $11.038 \pm 0.014$ \\
 & OJBKQ & $8.176 \pm 0.014$ & $11.016 \pm 0.019$ \\
 & MPQ & $8.128 \pm 0.010$ & $10.961 \pm 0.001$ \\
\midrule
Llama-3-8B & GPTQ+CD3 & $6.557 \pm 0.011$ & $9.427 \pm 0.010$ \\
 & QuantEase & $6.507 \pm 0.024$ & $9.369 \pm 0.012$ \\
 & OJBKQ & $6.495 \pm 0.001$ & $9.364 \pm 0.013$ \\
 & MPQ & $6.464 \pm 0.004$ & $9.326 \pm 0.007$ \\
\midrule
Llama-2-13B & GPTQ+CD3 & $5.005 \pm 0.007$ & $6.575 \pm 0.001$ \\
 & QuantEase & $4.987 \pm 0.008$ & $6.573 \pm 0.002$ \\
 & OJBKQ & $4.976 \pm 0.007$ & $6.576 \pm 0.002$ \\
 & MPQ & $4.969 \pm 0.010$ & $6.553 \pm 0.002$ \\
\bottomrule
\end{tabular}
\end{table}

\FloatBarrier

% Source: YjGm reply p8JeaH44v5, table AX 465

\begin{table}[htbp]
\centering

\small
\caption{Perplexity of 3-bit scalar quantization over three calibration draws (mean $\pm$ sample SD).}
\label{tab:resampling_3bit}
\begin{tabular}{llrr}
\toprule
Model & Method & WikiText-2 & C4 \\
\midrule
Llama-3.2-1B & GPTQ+CD3 & $15.352 \pm 0.202$ & $18.950 \pm 0.191$ \\
 & QuantEase & $14.909 \pm 0.174$ & $18.759 \pm 0.178$ \\
 & OJBKQ & $14.471 \pm 0.260$ & $18.346 \pm 0.395$ \\
 & MPQ & $13.776 \pm 0.091$ & $17.502 \pm 0.036$ \\
\midrule
Llama-3.2-3B & GPTQ+CD3 & $10.575 \pm 0.042$ & $13.767 \pm 0.029$ \\
 & QuantEase & $10.098 \pm 0.135$ & $13.467 \pm 0.079$ \\
 & OJBKQ & $10.003 \pm 0.090$ & $13.266 \pm 0.065$ \\
 & MPQ & $9.604 \pm 0.014$ & $12.919 \pm 0.032$ \\
\midrule
Llama-3-8B & GPTQ+CD3 & $8.429 \pm 0.019$ & $11.559 \pm 0.032$ \\
 & QuantEase & $8.185 \pm 0.097$ & $11.327 \pm 0.016$ \\
 & OJBKQ & $8.046 \pm 0.043$ & $11.073 \pm 0.029$ \\
 & MPQ & $7.848 \pm 0.026$ & $10.855 \pm 0.010$ \\
\midrule
Llama-2-13B & GPTQ+CD3 & $5.463 \pm 0.024$ & $7.110 \pm 0.001$ \\
 & QuantEase & $5.372 \pm 0.015$ & $6.980 \pm 0.003$ \\
 & OJBKQ & $5.384 \pm 0.021$ & $7.005 \pm 0.004$ \\
 & MPQ & $5.311 \pm 0.006$ & $6.902 \pm 0.003$ \\
\bottomrule
\end{tabular}
\end{table}

\FloatBarrier

MPQ has the lowest mean perplexity across all settings, with larger gains at 3 bits.

\subsection{D4 codebook and matched refinement}

\label{app:d4_controls}

We compare LDLQ, OJBKQ ($K=5$), and MPQ using the four-dimensional 2-bit D4 codebook and identical exact block-CD updates. Tables~\ref{tab:d4_objective} and~\ref{tab:d4_perplexity} give the objective and held-out perplexity.

% AUTHOR CHECK: confirm the model and calibration-run identifier for D4 against the original logs before final approval.

% Source: jjqr follow-up YiBNrSdp30, AX tables 1361 and 1400.

\begin{table}[htbp]
\centering

\small
\caption{Calibration objective under matched D4 refinement. CD0 is the initial hard assignment.}
\label{tab:d4_objective}
\begin{tabular}{lrrr}
\toprule
Initialization & CD0 & CD1 & CD3 \\
\midrule
LDLQ & 359.869170 & 347.919460 & 340.203384 \\
OJBKQ ($K=5$) & 357.7222 & 348.1638 & 340.3826 \\
MPQ & 371.615070 & 335.443011 & 324.342445 \\
\bottomrule
\end{tabular}
\end{table}

\begin{table}[htbp]
\centering

\small
\caption{Held-out perplexity with the 2-bit D4 codebook.}
\label{tab:d4_perplexity}
\begin{tabular}{lrr}
\toprule
Method & WikiText-2 & C4 \\
\midrule
LDLQ & 20.2967 & 24.1372 \\
LDLQ + CD3 & 14.6389 & 17.6817 \\
OJBKQ ($K=5$) + CD3 & 15.0872 & 17.8354 \\
MPQ + CD3 & 12.9658 & 15.7847 \\
\bottomrule
\end{tabular}
\end{table}

MPQ starts with the highest objective and has the lowest after the first CD sweep. After three sweeps, it also has the lowest WikiText-2 and C4 perplexity.

\FloatBarrier

\subsection{Matched solver budgets and extended refinement}

\label{app:matched_budget}

Table~\ref{tab:matched_solver_budget} compares MPQ and LDLQ at similar solver-time budgets on Llama-3-8B with E8P12. Timings follow the protocol in Section~\ref{sec:solver_scalability}.

% Source: jjqr follow-up YiBNrSdp30, AX table 1444.

\begin{table}[htbp]
\centering
\small
\caption{Solver time and WikiText-2 perplexity for two offline compute budgets.}
\label{tab:matched_solver_budget}
\begin{tabular}{lrr}
\toprule
Method & Time (s) & WikiText-2 \\
\midrule
LDLQ + CD3 & 2076 & 12.756 \\
MPQ ($T_{\mathrm{vamp}}=8$) + CD3 & 2134 & 12.370 \\
LDLQ + CD6 & 3057 & 12.728 \\
MPQ ($T_{\mathrm{vamp}}=16$) + CD3 & 2951 & 11.630 \\
\bottomrule
\end{tabular}
\end{table}

At the higher budget, MPQ achieves lower perplexity than LDLQ with slightly less solver time. Its advantage over LDLQ persists at 20 CD sweeps (Table~\ref{tab:extended_cd_controls}).

% Source: jjqr reply 8GjL0oGV6Z, AX table 854; retain only unambiguous CD3 and CD20 rows.

\begin{table}[htbp]
\centering
\small
\caption{Perplexity after extended CD refinement on Llama-3-8B with E8P12.}
\label{tab:extended_cd_controls}
\begin{tabular}{lrrr}
\toprule
Initialization & CD sweeps & WikiText-2 & C4 \\
\midrule
LDLQ & 3 & 12.756 & 16.034 \\
MPQ & 3 & 11.753 & 14.818 \\
LDLQ & 20 & 12.651 & 15.943 \\
MPQ & 20 & 11.669 & 14.728 \\
\bottomrule
\end{tabular}
\end{table}

\FloatBarrier

\section{Implementation}
\label{app:implementation}

This appendix collects the exact update formulas and module identities referenced in Section~\ref{sec:method}, and concludes with a Python-style pseudocode listing of our reference implementation.

\subsection{Extrinsic-message updates, initialization, and \texorpdfstring{$\beta$}{beta} update}
\label{app:impl_extrinsic}

\paragraph{Extrinsic updates.}
Let $(\hat x_1^t,\alpha_1^t)$ denote the denoiser output and sensitivity. The outgoing precision and mean are
\begin{equation}
\eta_1^t=\gamma_1^t/\alpha_1^t,
\qquad
\gamma_2^t=\eta_1^t-\gamma_1^t,
\qquad
r_2^t=\frac{\eta_1^t\hat x_1^t-\gamma_1^t r_1^t}{\gamma_2^t}.
\label{eq:extrinsic_12}
\end{equation}
Similarly, given the linear-module output $(\hat x_2^t,\alpha_2^t)$,
\begin{equation}
\eta_2^t=\gamma_2^t/\alpha_2^t,
\qquad
\gamma_1^{t+1}=\eta_2^t-\gamma_2^t,
\qquad
r_1^{t+1}=\frac{\eta_2^t\hat x_2^t-\gamma_2^t r_2^t}{\gamma_1^{t+1}}.
\label{eq:extrinsic_21}
\end{equation}
After each extrinsic update we damp both means and precisions geometrically with a single global damping coefficient $\rho_{\mathrm{damp}}\in(0,1]$ and clip precisions to a fixed positive interval $[\gamma_{\min},\gamma_{\max}]$ to avoid degenerate messages.

\paragraph{Initialization.}
We initialize $r_1^0\!\leftarrow\!w_{\mathrm{rtn}}$, where $w_{\mathrm{rtn}}$ denotes the round-to-nearest rounding of $w$, and set the initial precision per row by
\begin{equation}
\gamma_1^0=\frac{1}{\mathrm{MSE}(w,w_{\mathrm{rtn}})},
\qquad
\mathrm{MSE}(w,w_{\mathrm{rtn}}):=\frac{1}{n}\|w-w_{\mathrm{rtn}}\|_2^2,
\label{eq:gamma1_init}
\end{equation}
clipped below by a small numerical floor. This per-row scaling adapts $\gamma_1^0$ to the difficulty of the row and removes one continuous hyperparameter. The observation-noise precision is initialized by $\beta^0=1/\overline{\mathrm{MSE}}$ with $\overline{\mathrm{MSE}}$ averaged over the row batch.

\paragraph{Online M-step for $\beta$.}
We treat the inverse temperature $\beta$ in the Gibbs measure of Eq.~\eqref{eq:posterior} as the observation-noise precision of the linear module (Eq.~\eqref{eq:linear_module_app}) and update it after each linear-module step by the standard Gaussian-likelihood M-step under that posterior. In rank space, with $u_\ell:=\tilde y_\ell - s_\ell v_\ell^\top r_2^t$ the cavity residual along the $\ell$-th singular direction and $d_\ell:=\beta^t s_\ell^2+\gamma_2^t$, the update is
\begin{equation}
\frac{1}{\beta^{t+1}}\;\leftarrow\;\frac{1}{r}\sum_{\ell=1}^{r}\!\left[\left(\frac{\gamma_2^t}{d_\ell}\right)^{\!2}u_\ell^2+\frac{s_\ell^2}{d_\ell}\right],
\label{eq:beta_em}
\end{equation}
averaged over the row batch when batched, then lightly damped and clipped to a fixed numerical range. The two summands in Eq.~\eqref{eq:beta_em} are the squared posterior-mean residual and the posterior-trace correction, respectively. Removing $\beta$ as a free hyperparameter gave more stable behavior than fixing it to a constant; the main tables therefore use $\rho_{\mathrm{damp}}$ as the only continuous solver hyperparameter.

\subsection{Exact denoiser sensitivity}
\label{app:impl_denoiser}
For one block, define
\begin{equation}
q(c\mid r,\gamma)
=
\frac{\rho(c)\exp\!\left(-\frac{\gamma}{2}\|c-r\|_2^2\right)}{\sum_{c'\in\mathcal C}\rho(c')\exp\!\left(-\frac{\gamma}{2}\|c'-r\|_2^2\right)}.
\label{eq:disc_post_app}
\end{equation}
The mean and covariance are
\begin{equation}
\hat x(r)=\sum_c c\,q(c\mid r,\gamma),
\qquad
\Sigma(r)=\sum_c (c-\hat x(r))(c-\hat x(r))^\top q(c\mid r,\gamma).
\label{eq:disc_moments_app}
\end{equation}

\begin{proposition}[Discrete denoiser Jacobian]
\label{prop:denoiser_jacobian}
For every finite codebook and finite $\gamma>0$,
\begin{equation}
\frac{\partial \hat x(r)}{\partial r}=\gamma\,\Sigma(r).
\label{eq:disc_jacobian_app}
\end{equation}
Consequently, the average sensitivity used by MPQ is exactly
\begin{equation}
\alpha_1^t=\frac{\gamma_1^t}{n}\sum_{b=1}^B \operatorname{tr}\,\Sigma_{1,b}^t.
\label{eq:disc_alpha_app}
\end{equation}
\end{proposition}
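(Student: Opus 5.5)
We prove the Jacobian identity by differentiating the softmax form of $q(c\mid r,\gamma)$ in Eq.~\eqref{eq:disc_post_app} directly with respect to $r$. First write the unnormalized weight as $w_c(r)=\rho(c)\exp\!\bigl(-\tfrac{\gamma}{2}\|c-r\|_2^2\bigr)$. Its log-gradient is $\nabla_r \log w_c(r)=\gamma(c-r)$. Since $\rho(c)$ does not depend on $r$, the prior factor simply drops out; this is what makes the statement hold for any anchoring $\rho$ and any finite codebook. Let $Z(r)=\sum_{c'}w_{c'}(r)$, which is strictly positive because $\mathcal C$ is finite and every weight is positive. Everything is then a finite sum of smooth functions, so differentiability is immediate.

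Next, we compute $\nabla_r q(c\mid r,\gamma)$. Write $q(c)\,\nabla_r\log q(c)=q(c)\bigl[\gamma(c-r)-\sum_{c'}q(c')\gamma(c'-r)\bigr]$. The $-\gamma r$ terms cancel because $\sum_{c'}q(c')=1$. This leaves $\nabla_r q(c)=\gamma\,q(c)\,(c-\hat x(r))$ as a row vector. Then
\[
\frac{\partial \hat x(r)}{\partial r}=\sum_c c\,\bigl(\nabla_r q(c)\bigr)^\top=\gamma\sum_c q(c)\,c\,(c-\hat x(r))^\top .
\]
Subtracting $\gamma\,\hat x(r)\sum_c q(c)(c-\hat x(r))^\top=0$ turns the right-hand side into $\gamma\,\Sigma(r)$, using the definition in Eq.~\eqref{eq:disc_moments_app}. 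This gives Eq.~\eqref{eq:disc_jacobian_app}. The resulting matrix is symmetric, so the convention for laying out the Jacobian (rows versus columns) does not matter.

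For the consequence, recall that VAMP's average sensitivity is the normalized divergence $\frac1n\operatorname{tr}\,\partial \hat x_1/\partial r_1$ of the separable denoiser. Because the denoiser acts blockwise, the full Jacobian is block diagonal with blocks $\gamma_1^t\Sigma_{1,b}^t$. Taking the trace and summing over blocks yields Eq.~\eqref{eq:disc_alpha_app}, which matches Eq.~\eqref{eq:alpha1}.

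None of these steps is a real obstacle. The only place needing care is the cancellation of the $r$-dependent terms and the centering step that converts the second moment into the covariance. An alternative route is to note that $\hat x=\nabla_r$ of $\gamma^{-1}\log Z$ plus an affine shift, so the Jacobian is the Hessian of a log-partition function, and hence $\gamma$ times a covariance.
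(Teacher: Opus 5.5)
Your proof is correct and follows the paper's argument essentially step for step. You differentiate the normalized posterior to get $\partial q_c/\partial r=\gamma q_c(c-\hat x)$, and you use $\sum_c q_c(c-\hat x)=0$ to center $\gamma\sum_c q_c\,c(c-\hat x)^\top$ into $\gamma\Sigma(r)$; then you take blockwise traces of the block-diagonal Jacobian and average to obtain $\alpha_1^t$.
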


\begin{proof}
Differentiate the normalized discrete posterior. Writing $q_c:=q(c\mid r,\gamma)$ and $\hat x:=\hat x(r)$,
\begin{equation}
\frac{\partial q_c}{\partial r}=\gamma q_c(c-\hat x).
\end{equation}
Therefore
\begin{align}
\frac{\partial \hat x}{\partial r}
&=\sum_c c\left(\frac{\partial q_c}{\partial r}\right)^\top
=\gamma\sum_c q_c\,c(c-\hat x)^\top \\
&=\gamma\sum_c q_c\,(c-\hat x)(c-\hat x)^\top + \gamma\,\hat x\!\left(\sum_c q_c(c-\hat x)\right)^{\!\top} \\
&=\gamma\sum_c q_c(c-\hat x)(c-\hat x)^\top
=\gamma\,\Sigma(r),
\end{align}
where the second line writes $c=(c-\hat x)+\hat x$ inside the sum, and the third line uses $\sum_c q_c(c-\hat x)=0$. Taking traces blockwise and averaging yields Eq.~\eqref{eq:disc_alpha_app}.
\end{proof}

\subsection{Exact linear-module sensitivity}
\label{app:impl_linear}
\begin{proposition}[Linear sensitivity identity]
\label{prop:linear_sensitivity}
For the linear module
\begin{equation}
\hat x_2=(\beta H+\gamma_2 I)^{-1}(\beta g+\gamma_2 r_2),
\label{eq:linear_module_app}
\end{equation}
its Jacobian with respect to $r_2$ is
\begin{equation}
\frac{\partial \hat x_2}{\partial r_2}=\gamma_2(\beta H+\gamma_2 I)^{-1},
\label{eq:linear_jacobian_app}
\end{equation}
and hence the average sensitivity is
\begin{equation}
\alpha_2=\frac{1}{n}\operatorname{tr}\!\left[\gamma_2(\beta H+\gamma_2 I)^{-1}\right].
\label{eq:linear_alpha_app}
\end{equation}
\end{proposition}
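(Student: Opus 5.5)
The linear module $\hat x_2=(\beta H+\gamma_2 I)^{-1}(\beta g+\gamma_2 r_2)$ is affine in $r_2$, so its Jacobian is just the linear part. I will treat $\beta$, $H$, $g$ and $\gamma_2$ as fixed parameters. The only message-dependent argument is $r_2$. The case $\beta\ge 0$, $H\succeq 0$, $\gamma_2>0$ is the one arising in VAMP.

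First I will check that the resolvent exists. Since $H=A^\top A\succeq 0$ and $\beta\ge 0$, the matrix $\beta H+\gamma_2 I$ has all eigenvalues at least $\gamma_2>0$. It is therefore symmetric positive definite, and $K:=(\beta H+\gamma_2 I)^{-1}$ is well defined, symmetric, and independent of $r_2$.

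Next I write $\hat x_2=\beta K g+\gamma_2 K r_2$. The first term does not depend on $r_2$, and the second is a fixed matrix times $r_2$. Differentiating gives $\partial\hat x_2/\partial r_2=\gamma_2 K$, which is Eq.~\eqref{eq:linear_jacobian_app}.

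Finally, the VAMP sensitivity is the normalized divergence $\frac1n\operatorname{tr}(\partial\hat x_2/\partial r_2)$, used just as in Proposition~\ref{prop:denoiser_jacobian}. Taking the trace of the Jacobian then yields Eq.~\eqref{eq:linear_alpha_app}.

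As a consistency check, I would diagonalize $H=V\operatorname{diag}(s_\ell^2)V^\top$, padding with zero eigenvalues on the null space. This gives $\alpha_2=\frac1n\bigl[\sum_{\ell=1}^r \gamma_2/(\beta s_\ell^2+\gamma_2)+(n-r)\bigr]$. It matches the $(n-r)/n$ null-space contribution noted after Eq.~\eqref{eq:alpha2} and the rank-space formula Eq.~\eqref{eq:linear_rank_space}.

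There is no real obstacle. The only point requiring care is that $\beta$ must be held fixed during the differentiation. In the algorithm, $\beta^t$ is updated by the M-step Eq.~\eqref{eq:beta_em} only after the linear step. So, within a single linear-module evaluation, it is a constant and not a function of $r_2$.
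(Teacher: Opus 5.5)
Your proposal is correct and follows the paper's proof: $r_2\mapsto\hat x_2$ is affine, so the Jacobian is the coefficient $\gamma_2(\beta H+\gamma_2 I)^{-1}$, and averaging its diagonal gives $\alpha_2$. Your positive-definiteness check and your rank-space consistency check, which recovers the $(n-r)/n$ null-space term, are both correct; the paper does not spell either of them out.
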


\begin{proof}
The map $r_2\mapsto \hat x_2$ is affine, so its Jacobian is the coefficient of $r_2$, namely $\gamma_2(\beta H+\gamma_2 I)^{-1}$. Averaging the diagonal entries gives Eq.~\eqref{eq:linear_alpha_app}.
\end{proof}

\subsection{Block coordinate descent search as Mahalanobis nearest-codeword search}
\label{app:impl_mahalanobis}
Let $u=\tilde y-Tx$ be the current residual. For block $b$, define the local Gram matrix $G_b:=T_b^\top T_b$ and the local score $h_b:=T_b^\top u$.

\begin{proposition}[Equivalent blockwise Mahalanobis search]
\label{prop:mahalanobis}
Up to an additive constant independent of $c$, the block objective is
\begin{equation}
F_b(c)=\frac12(c-\mu_b)^\top G_b(c-\mu_b)+\mathrm{const},
\qquad
\mu_b:=x_b+G_b^{-1}h_b,
\label{eq:mahalanobis_obj_app}
\end{equation}
whenever $G_b$ is nonsingular. Thus exact block search is equivalent to nearest-codeword search in the Mahalanobis metric induced by $G_b$. If $G_b$ is singular, the algorithm still minimizes the quadratic form in Eq.~\eqref{eq:block_obj} directly. Note that $h_b=T_b^\top u\in\operatorname{range}(T_b^\top)=\operatorname{range}(T_b^\top T_b)=\operatorname{range}(G_b)$, so $G_b^+ h_b$ is well defined; equivalently, one may use the Moore--Penrose pseudoinverse with $\mu_b:=x_b+G_b^+h_b$ and the induced positive-semidefinite seminorm.
\end{proposition}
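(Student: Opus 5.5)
Expanding the quadratic on the right of Eq.~\eqref{eq:mahalanobis_obj_app} and matching it against Eq.~\eqref{eq:block_obj} term by term suffices. First, set $\Delta:=c-x_b$ and expand
\begin{equation*}
F_b(c)=\tfrac12\|u\|_2^2-u^\top T_b\Delta+\tfrac12\Delta^\top T_b^\top T_b\Delta
=\tfrac12\|u\|_2^2-h_b^\top\Delta+\tfrac12\Delta^\top G_b\Delta .
\end{equation*}
Here $u$ and $x_b$ are fixed during the block update, so $\tfrac12\|u\|_2^2$ does not depend on $c$. The remaining terms form a quadratic in $\Delta$ with Hessian $G_b$ and linear coefficient $-h_b$.

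Next, complete the square. When $G_b$ is nonsingular (hence positive definite, since it is a Gram matrix), set $\Delta^\star:=G_b^{-1}h_b$. Then
\begin{equation*}
\tfrac12\Delta^\top G_b\Delta-h_b^\top\Delta=\tfrac12(\Delta-\Delta^\star)^\top G_b(\Delta-\Delta^\star)-\tfrac12 h_b^\top G_b^{-1}h_b .
\end{equation*}
Because $\Delta-\Delta^\star=c-(x_b+G_b^{-1}h_b)=c-\mu_b$, this gives Eq.~\eqref{eq:mahalanobis_obj_app}. The constant is $\tfrac12\|u\|^2-\tfrac12 h_b^\top G_b^{-1}h_b$, and it is independent of $c$. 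As a result, $\arg\min_{c\in\mathcal C_b}F_b(c)$ is the same as $\arg\min_{c}\|c-\mu_b\|_{G_b}$, which is a nearest-codeword search in the Mahalanobis metric $G_b$.

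For the singular case, I would first verify the range claim. The identity $\operatorname{range}(T_b^\top)=\operatorname{range}(T_b^\top T_b)$ holds because both have orthogonal complement $\ker T_b=\ker T_b^\top T_b$; the second equality uses $T_b^\top T_bv=0\Rightarrow\|T_bv\|^2=0$. So $h_b=G_bz$ for some $z$. Then $G_bG_b^+h_b=h_b$, since $G_bG_b^+$ is the orthogonal projector onto $\operatorname{range}(G_b)$. With this, the same completion of the square goes through using $\Delta^\star:=G_b^+h_b$. Expanding $\tfrac12(\Delta-\Delta^\star)^\top G_b(\Delta-\Delta^\star)$ produces the cross term $-\Delta^\top G_b G_b^+h_b=-\Delta^\top h_b$, and the constant term is $\tfrac12 h_b^\top G_b^+G_bG_b^+h_b=\tfrac12 h_b^\top G_b^+h_b$. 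The metric is then the seminorm induced by $G_b\succeq0$.

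The only step that needs real care is this singular case, specifically the cross-term identity $G_bG_b^+h_b=h_b$. That identity fails for a general $h$ and holds here only because of the range inclusion. The rest is routine algebra.
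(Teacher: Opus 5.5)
Your proposal is correct and follows the paper's proof: you expand $F_b$ as a quadratic in $c-x_b$ with Hessian $G_b$ and linear coefficient $-h_b$, then complete the square around $\mu_b=x_b+G_b^{-1}h_b$. Your singular-case argument, the range identity via $\ker T_b=\ker T_b^\top T_b$ and the resulting $G_bG_b^+h_b=h_b$, supplies a step that the paper only asserts in the statement and does not prove.
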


\begin{proof}
Expand Eq.~\eqref{eq:block_obj}:
\begin{align}
F_b(c)
&=\frac12\|u\|_2^2-u^\top T_b(c-x_b)+\frac12(c-x_b)^\top G_b(c-x_b) \\
&=\frac12(c-x_b)^\top G_b(c-x_b)-h_b^\top(c-x_b)+\mathrm{const}.
\end{align}
Completing the square around $\mu_b=x_b+G_b^{-1}h_b$ yields Eq.~\eqref{eq:mahalanobis_obj_app}.
\end{proof}

Proposition~\ref{prop:mahalanobis} makes explicit why initialization still matters even though the second stage is exact within each block. The search metric itself depends on the current residual, so the later refinement remains local.

\subsection{Reference implementation}
\label{app:impl_pseudocode}

The pseudocode below gives Python-style pseudocode for one row block of MPQ. Compared with Algorithm~\ref{alg:mpq}, it makes explicit the precision retry/clipping safeguards on $\alpha_1$ and $\gamma_2$, and a best-iterate selection rule (initialized to the RTN baseline) that passes the hard MAP projection with the smallest rank-space surrogate error to CD initialization.

\begin{verbatim}
def mpq_solve_row_batch(W, U, Sigma_A, V, codebooks, sigma, tau,
                        T_vamp, S_CD, damping):
    # ---- Shape definitions -----------------------------------------
    # W: (nrow, n) full-precision rows (nrow row-batch size, n features)
    # U, Sigma_A, V: compact SVD of A = U * diag(Sigma_A) * V^T,
    #   with Sigma_A of shape (r,) for r = rank(A)
    # codebooks[b]: (M_b, d) codeword matrix for block b
    # sigma, tau: parameters of rho_b in Eq. (2)
    nrow, n    = W.shape
    r          = Sigma_A.shape[0]
    d          = codebooks[0].shape[1]
    num_blocks = n // d                        # equals B in the paper
    # b_slice(b) := slice(b*d, (b+1)*d)        # zero-based block slice

    # ---- Stage I: finite-temperature VAMP in rank space ----
    x_rtn = round_to_nearest(W, codebooks)
    y_tilde = W @ V * Sigma_A   # (nrow, r); row j equals U^T A w_j
    s = Sigma_A # (r,)
    r1 = x_rtn.copy()                                 # (nrow, n)
    mse_init = ((W - x_rtn) ** 2).mean(axis=1)        # (nrow,)
    gamma1 = 1 / mse_init.clip(min=1e-12)            # (nrow,) per-row init
    beta = 1.0 / mse_init.mean().clip(min=1e-12)      # scalar EM init
    r2_prev, gamma2_prev = None, None

    for t in range(T_vamp):
        # ---- Denoiser (Eqs. (4)-(6)); retry gamma1 on degenerate alpha1 ----
        for _ in range(MAX_RETRIES):
            q, x1, Sigma1 = denoiser_posterior(
                r1, gamma1, codebooks,
                prior_center=W, sigma=sigma, tau=tau)
            alpha1 = gamma1 * trace_block_sum(Sigma1) / n      # Eq. (6)
            if alpha_in_safe_range(alpha1):
                break
            gamma1 = retry_rescale(gamma1, alpha1)             # multiplicative
        alpha1 = clip_alpha(alpha1)

        # ---- Extrinsic 1 -> 2 (Eqs. (19)-(20)); damping, clipping ----
        eta1   = gamma1 / alpha1
        gamma2 = clip_precision(eta1 - gamma1)
        r2     = (eta1 * x1 - gamma1 * r1) / gamma2
        if r2_prev is not None and damping < 1.0:
            gamma2 = log_damp(gamma2_prev, gamma2, damping)
            r2     = (1 - damping) * r2_prev + damping * r2

        # ---- Linear module in rank space (Eqs. (7)-(8)) ----            
        denom      = beta * s ** 2 + gamma2[:, None]           # (nrow, r)
        residual_u = y_tilde - (r2 @ V) * s                    # (nrow, r)
        x2         = r2 + (residual_u * (beta * s) / denom) @ V.T
        alpha2_raw = ((1 - r / n) +
                      (gamma2 / n) * (1.0 / denom).sum(axis=1))
        alpha2     = clip_alpha(alpha2_raw)

        # ---- EM update of beta (App. B.1) ----
        beta = em_update_beta(residual_u, gamma2, s, beta,
                              num_obs=r, damping=BETA_EM_DAMPING)

        # ---- Extrinsic 2 -> 1 (Eqs. (21)-(22)); damping ----
        eta2       = gamma2 / alpha2
        gamma1_new = clip_precision(eta2 - gamma2)
        r1_new     = (eta2 * x2 - gamma2 * r2) / gamma1_new
        if damping < 1.0:
            gamma1 = log_damp(gamma1, gamma1_new, damping)
            r1     = (1 - damping) * r1 + damping * r1_new
        else:
            gamma1, r1 = gamma1_new, r1_new
        r2_prev, gamma2_prev = r2, gamma2

    # ---- MAP projection ----
    q_final, _, _ = denoiser_posterior(
        r1, gamma1, codebooks,
        prior_center=W, sigma=sigma, tau=tau
    )
    x = blockwise_argmax(q_final, codebooks).copy()

    # ---- Stage II: exact block coordinate descent (Eqs. (11)-(12)) ----
    # In rank space T = diag(Sigma_A) . V^T;
    # column block b is s[:, None] * V[b_slice(b)].T.
    u = y_tilde - (x @ V) * s                                  # residual
    for sweep in range(S_CD):
        changed = False
        for b in range(num_blocks):
            x_b_old = x[:, b_slice(b)]
            T_b     = s[:, None] * V[b_slice(b)].T             # (r, d)
            x_b_new = argmin_F_b(u, T_b=T_b,
                                 x_b_old=x_b_old,
                                 codebook=codebooks[b])        # Eq. (12)
            if not equal(x_b_new, x_b_old):
                u -= (x_b_new - x_b_old) @ T_b.T               # update residual
                x[:, b_slice(b)] = x_b_new
                changed = True
        if not changed:
            break
    return x
\end{verbatim}

\paragraph{Helper contracts.}
The pseudocode above uses a small set of helper routines. Their contracts are as follows; all are deterministic, and their references make the listing self-contained.
\begin{itemize}
\item \texttt{round\_to\_nearest(W, codebooks)} returns, for each row of $W$ and each block $b$, the codeword in $\mathcal C_b$ closest in $\ell_2$ to the corresponding block of the row.
\item \texttt{denoiser\_posterior(r1, gamma1, codebooks, prior\_center=W,\\ sigma, tau)} evaluates Eqs.~\eqref{eq:disc_post_app}--\eqref{eq:disc_moments_app} blockwise with $\rho_b$ defined as in Eq.~\eqref{eq:posterior}; it returns the blockwise posterior $q$, mean $\hat x_1$, and covariance $\Sigma_1$.
\item \texttt{trace\_block\_sum(Sigma1)} computes $\sum_{b=1}^B\operatorname{tr}\Sigma_{1,b}$, a row-wise scalar.
\item \texttt{blockwise\_argmax(q, codebooks)} returns $\hat x$ with $\hat x_b=\arg\max_{c\in\mathcal C_b}q_b(c)$, used for MAP projection (Eq.~\eqref{eq:map_init}) and for the best-iterate diagnostic.
\item \texttt{argmin\_F\_b(u, T\_b, x\_b\_old, codebook)} solves Eq.~\eqref{eq:block_update} by enumeration over \texttt{codebook}, given the current rank-space residual $u$ and the rank-space block operator $T_b$.
\item \texttt{em\_update\_beta(residual\_u, gamma2, s, beta, num\_obs)} returns the EM-style update of $\beta$ described in Eq.~\eqref{eq:beta_em}.

\item \texttt{clip\_alpha(a)} clips into $[\varepsilon,1-\varepsilon]$ for a small fixed $\varepsilon$; \texttt{clip\_precision(g)} clips into $[\gamma_{\min},\gamma_{\max}]$ for fixed positive constants.
\item \texttt{alpha\_in\_safe\_range(alpha1)} tests whether $\alpha_1\in[\varepsilon,1-\varepsilon]$ for the same $\varepsilon$.
\item \texttt{retry\_rescale(gamma1, alpha1)} multiplies $\gamma_1$ by a fixed factor $>1$ when $\alpha_1\ge 1-\varepsilon$ and by its reciprocal when $\alpha_1\le\varepsilon$, leaving $\gamma_1$ otherwise unchanged.
\item \texttt{log\_damp(prev, new, damping)} returns $\exp\bigl((1-\mathtt{damping})\log\mathtt{prev}+\mathtt{damping}\log\mathtt{new}\bigr)$, the geometric (log-space) damping used for precisions.
\item \texttt{b\_slice(b)} returns the (Python, zero-based) index slice $[bd,(b+1)d)$ for the $b$-th block of width $d$; \texttt{equal} returns true when its two arguments are identical, where blocks may be represented either as codeword vectors or by their stored integer codeword indices.
\end{itemize}

\paragraph{Implementation notes.}
First, the per-row choice of $\gamma_1^0$ via $1/\mathrm{MSE}(w,w_{\mathrm{rtn}})$ removes one continuous hyperparameter. State evolution is locally insensitive to this choice, but per-row scaling avoids pathologically small initial precisions on easy rows.
Second, the precision retries on $\alpha_1$ and the precision clipping on $\gamma_2,\gamma_1^{t+1}$ guard against the rare degenerate cavities that finite discrete-denoiser sensitivities can produce. These safeguards activate on a small fraction of iterations in our runs and do not change the qualitative behavior of the recursion.
%Third, best-iterate selection scores each hard MAP projection $\hat x_t = \arg\max_c q_b^t(c)$ produced during the VAMP iterations against $w_{\mathrm{rtn}}$ and keeps the iterate with the smallest rank-space surrogate error. This iterate is then used as the CD initialization. By construction the surrogate error of this initialization is bounded by that of $w_{\mathrm{rtn}}$, so the Stage-I init is never worse than RTN.

\section{Limitations}
\label{app:limitations}

% We highlight two limitations of the present work.

\paragraph{Asymptotic theory.} The analysis in Section~\ref{sec:theory} takes the unanchored $\tau\to\infty$ limit and adopts an idealized right-orthogonally invariant surrogate. The MPQ algorithm itself runs at finite $\tau$ on Hessians that are RHT-conditioned but not exactly right-orthogonally invariant in the asymptotic sense. We therefore treat Theorem~\ref{thm:warmstart} and Proposition~\ref{thm:linear_response} as qualitative guidance rather than tight characterizations of the deployed algorithm. A finite-$\tau$ and finite-dimensional analysis remains open.

\paragraph{Codebook scope.} Our experiments cover uniform scalar codebooks at 3 and 4 bits and E8P12 and D4 lattice codebooks at 2 bits. Adapting MPQ to learned vector codebooks (AQLM) or trellis-coded codebooks (QTIP) is straightforward in principle, since the message-passing view in Section~\ref{sec:preliminary} is codebook-agnostic, but it requires codebook-specific extensions to the discrete denoiser that we do not evaluate here.

\section{Broader impact}
\label{app:broader_impact}

This work proposes an improved solver-side algorithm for post-training weight quantization of large language models. Our contribution sits within the foundational research category, and we view its broader societal effects as inheriting from those of the underlying language models rather than introducing new categories of risk.

Potential positive impacts include lower memory and compute requirements for deploying large language models. This can broaden access to language modeling technology in resource-constrained settings, including smaller research labs, edge devices, and regions with limited cloud infrastructure, and can reduce the energy footprint of inference at scale.

Potential negative impacts derive from the same lowering of deployment cost. Quantization techniques that make large language models cheaper to serve can also accelerate the deployment of language models in applications with harmful intent, such as disinformation and automated abuse. We do not see a direct or unique path from MPQ specifically to such harms beyond what is shared by the broader class of weight quantization methods and by language model deployment in general. The present submission releases an offline quantization solver rather than pre-trained models, fine-tuned weights, or training data.

\section{Detailed proofs}
\label{app:theory}

This appendix provides full proofs for the two main theorems stated in Section~\ref{sec:theory} and records the supporting results. Exact implementation identities used below are collected in Appendix~\ref{app:implementation}.

\subsection{Auxiliary specialization of the right-orthogonally invariant (ROI) state evolution}
\label{app:proof_se_specialization}
We restate the right-orthogonally invariant surrogate here for completeness. We analyze one row-wise PTQ subproblem through
\begin{equation}
y = A x^\star + \xi,
\qquad
\xi \sim \mathcal N(0,\beta_0^{-1} I_m),
\label{eq:app_surrogate_two}
\end{equation}
where $x^\star=[C_1^\star;\ldots;C_B^\star]$ has fixed block size $d$ and i.i.d.\ blocks from the uniform prior $\pi=\operatorname{Unif}(\mathcal C)$ on $\mathcal C\subset\mathbb R^d$. This is the unanchored $\tau\to\infty$ limit of the reference measure in Eq.~\eqref{eq:posterior}; finite $\tau$ in the algorithm is treated as an anchoring regularizer rather than as part of this surrogate. The matrix $A\in\mathbb R^{m\times n}$ is assumed ROI with limiting spectral law $\nu$ for $H=A^\top A$. For a precision $\gamma>0$, the block-discrete denoiser is
\begin{equation}
g_d(r;\gamma)=\sum_{c\in\mathcal C} c\,q(c\mid r,\gamma),
\qquad
q(c\mid r,\gamma) \propto \exp\!\left(-\frac{\gamma}{2}\|r-c\|_2^2\right),
\label{eq:app_gd}
\end{equation}
and the denoiser and linear error maps are
\begin{equation}
E_d(\gamma)=\frac{1}{d}\,\mathbb E\Bigl[\|g_d(C+\gamma^{-1/2}G;\gamma)-C\|_2^2\Bigr],
\qquad
E_{\mathrm{lin}}(\gamma)=\int \frac{1}{\beta_0\lambda+\gamma}\,d\nu(\lambda),
\label{eq:app_error_maps_two}
\end{equation}
with $C\sim\operatorname{Unif}(\mathcal C)$ and $G\sim\mathcal N(0,I_d)$. The integral expression for $E_{\mathrm{lin}}$ is the large-system counterpart of the finite-$n$ trace in Eq.~\eqref{eq:Elin_def_main}. The corresponding extrinsic-precision maps are $\Psi_d(\gamma):=E_d(\gamma)^{-1}-\gamma$ and $\Psi_{\mathrm{lin}}(\gamma):=E_{\mathrm{lin}}(\gamma)^{-1}-\gamma$.

\begin{proposition}[SE specialization to the MPQ denoiser]
\label{prop:se_specialization_appendix}
Under the surrogate model~\eqref{eq:app_surrogate_two} and the standard fixed-size block-separable extension of the right-orthogonally invariant assumptions of \citep[Theorem~1]{vamp}, the message-passing stage precisions obey
\begin{equation}
\gamma_2^t=\Psi_d(\gamma_1^t),
\qquad
\gamma_1^{t+1}=\Psi_{\mathrm{lin}}(\gamma_2^t).
\label{eq:app_se_recursion_two}
\end{equation}
Equivalently, each block sees the effective AWGN channel
\begin{equation}
R_b^t=C_b^\star+(\gamma_1^t)^{-1/2}G_b,
\qquad G_b\sim\mathcal N(0,I_d).
\end{equation}
\end{proposition}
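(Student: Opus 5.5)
This proposition is a specialization of the general VAMP state-evolution theorem \citep[Theorem~1]{vamp}; we do not reprove the asymptotics. The plan is to verify that the MPQ denoiser and linear module are admissible instances of the two VAMP modules, and then to compute the two sensitivity/error maps explicitly.

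\textbf{Step 1: the denoiser is the Bayes-optimal (MMSE) block-separable denoiser for the uniform prior.} In the unanchored limit $\tau\to\infty$, $\rho_b$ is constant on $\mathcal C$, and Eq.~\eqref{eq:denoiser_posterior} reduces to $q(c\mid r,\gamma)\propto\exp(-\tfrac{\gamma}{2}\|r-c\|_2^2)$. This is exactly the posterior of $C$ given $R=C+\gamma^{-1/2}G$ with $C\sim\operatorname{Unif}(\mathcal C)$. Hence $g_d$ in Eq.~\eqref{eq:app_gd} is the conditional mean $\mathbb E[C\mid R]$. It is smooth and Lipschitz, being a softmax-weighted average over a finite set, so it satisfies the regularity required in the block-separable extension of VAMP, where fixed-size blocks $d$ play the role of scalar coordinates.

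\textbf{Step 2: the divergence identity.} By Proposition~\ref{prop:denoiser_jacobian}, the Jacobian is $\gamma\Sigma(r)$. Its normalized trace $\alpha_1=\gamma\,\mathbb E[\operatorname{tr}\Sigma]/d$ therefore equals $\gamma E_d(\gamma)$, because for the MMSE estimator the expected posterior variance equals the MSE (Nishimori / tower property). The extrinsic update Eq.~\eqref{eq:extrinsic_12} then gives
\[
\gamma_2=\frac{\gamma_1}{\alpha_1}-\gamma_1=E_d(\gamma_1)^{-1}-\gamma_1=\Psi_d(\gamma_1).
\]
Correspondingly, in the general SE the error of the extrinsic message is $1/\gamma_2$ when matched.

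\textbf{Step 3: the linear module at matched $\beta=\beta_0$.} By Proposition~\ref{prop:linear_sensitivity}, $\alpha_2=\gamma_2\,\tfrac1n\operatorname{tr}(\beta_0H+\gamma_2 I)^{-1}$, which converges to $\gamma_2E_{\mathrm{lin}}(\gamma_2)$ under the spectral law $\nu$. Because this module is the LMMSE estimator for a Gaussian prior with precision $\gamma_2$, its error equals $E_{\mathrm{lin}}(\gamma_2)$. Then Eq.~\eqref{eq:extrinsic_21} yields $\gamma_1^{t+1}=\Psi_{\mathrm{lin}}(\gamma_2^t)$.

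\textbf{Step 4: the effective channel.} VAMP Theorem~1 asserts that the empirical joint law of $(x^\star,r_1^t)$ converges to that of $(X,X+\tau_1^{1/2}G)$. In the matched (Bayes-optimal) case, the SE variance satisfies $\tau_1^t=(\gamma_1^t)^{-1}$. Blockwise, this is the stated AWGN channel $R_b^t=C_b^\star+(\gamma_1^t)^{-1/2}G_b$.

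The main obstacle is justifying the matched-condition identities, that is, error $=1/\text{precision}$ at each stage. These hold only when both modules are Bayes-optimal: the uniform prior must be exact, which is why $\tau\to\infty$ is needed, and $\beta$ must equal $\beta_0$. The EM-adapted $\beta^t$ and the damping/clipping steps fall outside the theorem, so I would state the result for the idealized fixed-$\beta_0$, undamped recursion. I would then note that a damped fixed point coincides with an undamped one.
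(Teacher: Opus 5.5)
Your proposal is correct and follows essentially the same route as the paper: it invokes the block-lifted VAMP Theorem~1 and uses the Jacobian identity $\partial\hat x/\partial r=\gamma\Sigma$ and the linear sensitivity identity to obtain $\alpha_1=\gamma_1E_d(\gamma_1)$ and $\alpha_2=\gamma_2E_{\mathrm{lin}}(\gamma_2)$. It then substitutes these into the extrinsic updates and reads off the effective channel from the scalar-equivalent model. Your appeal to the tower (Nishimori) property to equate expected posterior covariance with the MMSE, and your caveats about matched conditions ($\tau\to\infty$, $\beta=\beta_0$, $\tau_1^t=(\gamma_1^t)^{-1}$), spell out steps the paper's proof asserts without comment.
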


\begin{proof}
This follows by applying the standard fixed-size block lifting of the VAMP state-evolution result of \citep[Theorem~1]{vamp}. On the block-lifted system, the MPQ denoiser is separable across fixed-size blocks and smooth for every finite $\gamma$ because the codebook is finite. By Proposition~\ref{prop:denoiser_jacobian} in Appendix~\ref{app:implementation}, the matched denoiser sensitivity is
\begin{equation}
\alpha_1 = \frac{\gamma_1}{d}\,\mathbb E\,\operatorname{tr}\,\operatorname{Cov}(C\mid R)=\gamma_1 E_d(\gamma_1).
\label{eq:app_alpha1_two}
\end{equation}
Likewise, Proposition~\ref{prop:linear_sensitivity} yields the matched linear sensitivity
\begin{equation}
\alpha_2 = \gamma_2 E_{\mathrm{lin}}(\gamma_2).
\label{eq:app_alpha2_two}
\end{equation}
Substituting Eqs.~\eqref{eq:app_alpha1_two}--\eqref{eq:app_alpha2_two} into the standard VAMP extrinsic precision updates,
\begin{equation}
\eta_i = \gamma_i/\alpha_i,
\qquad
\bar\gamma_i = \eta_i-\gamma_i,
\label{eq:app_extrinsic_two}
\end{equation}
produces
\begin{equation}
\bar\gamma_1 = E_d(\gamma_1)^{-1}-\gamma_1,
\qquad
\bar\gamma_2 = E_{\mathrm{lin}}(\gamma_2)^{-1}-\gamma_2,
\end{equation}
which is exactly Eq.~\eqref{eq:app_se_recursion_two}. The effective-channel representation follows from the scalar-equivalent model delivered by the VAMP theorem, and local convergence holds when the recursion is initialized in the basin of the selected attractive fixed point.
\end{proof}

\subsection{Proof of Theorem~\ref{thm:warmstart}}
\label{app:proof_warmstart}
\begin{proof}
For each competitor $j\neq j^\star$, under $R=C^\star+\eta$ and the uniform prior, we have
\begin{equation}
s_j(R)-s_{j^\star}(R)
=
-\Delta_j + \gamma_1^\star\langle \eta,\delta_j\rangle,
\qquad
\Delta_j:=\frac{\gamma_1^\star}{2}\|\delta_j\|_2^2 .
\label{eq:app_score_diff_two}
\end{equation}
Since $\langle \eta,\delta_j\rangle\sim \mathcal N\bigl(0,(\gamma_1^\star)^{-1}\|\delta_j\|_2^2\bigr)$, the score difference is Gaussian with mean $-\Delta_j$ and variance $\gamma_1^\star\|\delta_j\|_2^2$. Therefore
\begin{equation}
\mathbb P\bigl(s_j(R)\ge s_{j^\star}(R)\bigr)
=
Q_{\mathrm G}\!\left(\frac{\sqrt{\gamma_1^\star}\|\delta_j\|_2}{2}\right),
\label{eq:app_pairwise_tail_two}
\end{equation}
where $Q_{\mathrm G}(t)=\mathbb P(Z\ge t)$ for $Z\sim\mathcal N(0,1)$. Applying the Gaussian tail bound $Q_{\mathrm G}(t)\le e^{-t^2/2}$ for $t\ge 0$ (the argument $\sqrt{\gamma_1^\star}\|\delta_j\|_2/2$ is nonnegative since $\delta_j\neq 0$ for $j\neq j^\star$ and $\gamma_1^\star>0$) gives
\begin{equation}
\mathbb P\bigl(s_j(R)\ge s_{j^\star}(R)\bigr)
\le
\exp\!\left(-\frac{\gamma_1^\star\|\delta_j\|_2^2}{8}\right).
\end{equation}
The MAP decoder fails only if at least one competitor outranks the true codeword, so a union bound over $j\neq j^\star$ yields Eq.~\eqref{eq:map_bound_two}.

For the expected number of incorrect blocks under the i.i.d.\ block surrogate, let $\mathsf E_b$ be the event that block $b$ is decoded incorrectly by the MAP projection. Conditional on a fixed true codeword, the same bound applies to each block, and summing over $b$ gives the factor $B$. Averaging the conditional bound over the uniform draw of $C_b^\star$ gives the unconditional bound.
\end{proof}

\subsection{Proof of Proposition~\ref{thm:linear_response}}
\label{app:proof_linear_response}
We first record the orthogonal-similarity invariance that justifies analyzing the unrotated path $H_t=D_tH_0D_t$ in the AWQ+RHT pipeline.

\begin{lemma}[Orthogonal invariance of the linear module]
\label{lem:orthogonal_invariance_elin}
For any orthogonal matrix $O$ and any symmetric $H\succeq 0$,
\begin{equation}
\frac1n\operatorname{tr}\!\left[(\beta_0 O^\top H O+\gamma I)^{-1}\right]
=
\frac1n\operatorname{tr}\!\left[(\beta_0 H+\gamma I)^{-1}\right].
\label{eq:orth_invariance_elin}
\end{equation}
Consequently, if
\begin{equation}
H_{\mathrm{raw}}=O^\top H_0 O,
\qquad
H_{\mathrm{prep}}=O^\top D H_0 D O,
\end{equation}
as in the AWQ+RHT pipeline, then
\begin{equation}
E_{\mathrm{lin}}(H_{\mathrm{prep}},\gamma)-E_{\mathrm{lin}}(H_{\mathrm{raw}},\gamma)
=
E_{\mathrm{lin}}(D H_0 D,\gamma)-E_{\mathrm{lin}}(H_0,\gamma).
\end{equation}
\end{lemma}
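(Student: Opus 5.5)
The plan is to prove the trace identity \eqref{eq:orth_invariance_elin} first, by pulling the orthogonal conjugation outside the resolvent, and then to read off the difference formula as a direct corollary. For the first part I use $O^\top O=OO^\top=I$ to write
\begin{equation*}
\beta_0 O^\top H O+\gamma I=O^\top(\beta_0 H+\gamma I)O .
\end{equation*}
Because $H\succeq 0$ and $\gamma>0$, the matrix $\beta_0 H+\gamma I$ is symmetric positive definite, and in particular invertible. (Here $\beta_0>0$, as it is a noise precision.) The inverse of a product of invertible matrices is the product of the inverses in reverse order, which gives
\begin{equation*}
(\beta_0 O^\top H O+\gamma I)^{-1}=O^\top(\beta_0 H+\gamma I)^{-1}O .
\end{equation*}
Taking traces and applying cyclicity, $\operatorname{tr}(O^\top M O)=\operatorname{tr}(MOO^\top)=\operatorname{tr}(M)$. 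Dividing by $n$ then yields \eqref{eq:orth_invariance_elin}.

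An equivalent way to see this is through eigenvalues. The normalized trace of the resolvent equals $\frac1n\sum_i(\beta_0\lambda_i+\gamma)^{-1}$, and $O^\top HO$ has the same eigenvalues as $H$. I will mention this in passing because it matches the spectral form of $E_{\mathrm{lin}}$ in \eqref{eq:app_error_maps_two}.

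For the second part I apply the identity twice, once with $H=H_0$ and once with $H=DH_0D$. Both are symmetric positive semidefinite, since $D$ is diagonal and $x^\top DH_0Dx=(Dx)^\top H_0(Dx)\ge 0$. This gives $E_{\mathrm{lin}}(H_{\mathrm{raw}},\gamma)=E_{\mathrm{lin}}(H_0,\gamma)$ and $E_{\mathrm{lin}}(H_{\mathrm{prep}},\gamma)=E_{\mathrm{lin}}(DH_0D,\gamma)$. Subtracting the two equalities gives the claimed difference identity.

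I do not expect a real obstacle. The only points that need care are stating invertibility, which rests on $\gamma>0$ and $H\succeq0$, and checking that $DH_0D$ stays positive semidefinite so that the lemma's hypothesis holds for the preprocessed Hessian.
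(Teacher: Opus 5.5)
Your proposal is correct and follows the paper's proof: you rewrite the resolvent of $\beta_0 O^\top H O+\gamma I$ as $O^\top(\beta_0 H+\gamma I)^{-1}O$, conclude by cyclicity of the trace, and get the difference identity by applying the result to $H_0$ and $DH_0D$. Your explicit checks of invertibility (from $\gamma>0$) and of $DH_0D\succeq 0$ are details the paper leaves implicit.
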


\begin{proof}
Since $(O^\top M O)^{-1}=O^\top M^{-1}O$ for any invertible matrix $M$, we have
\begin{equation}
(\beta_0 O^\top H O+\gamma I)^{-1}
=
O^\top (\beta_0 H+\gamma I)^{-1}O.
\end{equation}
Taking traces and using cyclicity gives Eq.~\eqref{eq:orth_invariance_elin}.
\end{proof}

\begin{proof}[Proof of Theorem~\ref{thm:linear_response}]
By Lemma~\ref{lem:orthogonal_invariance_elin}, the AWQ+RHT comparison reduces to the unrotated path $H_t=D_tH_0D_t$, which we now analyze. Define
\begin{equation}
Q_t(\gamma):=\beta_0 H_t + \gamma I,
\qquad
E_{\mathrm{lin}}(H_t,\gamma)=\frac1n\,\operatorname{tr}\bigl[Q_t(\gamma)^{-1}\bigr].
\end{equation}
Since $D_t = e^{tL}$, differentiating $H_t=D_tH_0D_t$ gives
\begin{equation}
\dot H_t = L H_t + H_t L.
\label{eq:app_Hdot_two}
\end{equation}
Hence
\begin{equation}
\dot Q_t = \beta_0(LH_t+H_tL).
\end{equation}
Using the inverse-derivative identity $\frac{d}{dt}Q_t^{-1} = -Q_t^{-1}\dot Q_t Q_t^{-1}$, we obtain
\begin{align}
\frac{d}{dt}E_{\mathrm{lin}}(H_t,\gamma)
&= \frac1n\frac{d}{dt}\operatorname{tr}\bigl[Q_t^{-1}\bigr] \\
&= -\frac1n\operatorname{tr}\bigl[Q_t^{-1}\dot Q_t Q_t^{-1}\bigr] \\
&= -\frac{\beta_0}{n}\operatorname{tr}\bigl[Q_t^{-1}(LH_t+H_tL)Q_t^{-1}\bigr].
\label{eq:app_elin_start_two}
\end{align}
Now $Q_t$ is a polynomial in $H_t$, so $Q_t^{-1}$ commutes with $H_t$; we use this commutation below. Therefore
\begin{align}
\operatorname{tr}\bigl[Q_t^{-1}LH_tQ_t^{-1}\bigr]
&= \operatorname{tr}\bigl[LH_tQ_t^{-2}\bigr], \\
\operatorname{tr}\bigl[Q_t^{-1}H_tLQ_t^{-1}\bigr]
&= \operatorname{tr}\bigl[LQ_t^{-1}H_tQ_t^{-1}\bigr]
 = \operatorname{tr}\bigl[LH_tQ_t^{-2}\bigr].
\end{align}
Substituting into Eq.~\eqref{eq:app_elin_start_two} yields Eq.~\eqref{eq:Elin_derivative_two},
\begin{equation}
\frac{d}{dt}E_{\mathrm{lin}}(H_t,\gamma)
=
-\frac{2\beta_0}{n}\operatorname{tr}\!\left[L H_t (\beta_0 H_t + \gamma I)^{-2}\right].
\end{equation}
Integrating from $t=0$ to $t=1$ gives Eq.~\eqref{eq:Elin_integral_two}.
\end{proof}

For interpretation, it is convenient to rewrite the trace coordinatewise.

\begin{corollary}[Coordinate form of the resolvent response]
\label{cor:coordinate_response_elin}
Let
\begin{equation}
\kappa_i(t,\gamma):=\left[H_t(\beta_0 H_t+\gamma I)^{-2}\right]_{ii}.
\end{equation}
Then $\kappa_i(t,\gamma)\ge 0$ for every $i,t,\gamma$, and
\begin{equation}
E_{\mathrm{lin}}(H_1,\gamma)-E_{\mathrm{lin}}(H_0,\gamma)
=
-\frac{2\beta_0}{n}\sum_{i=1}^n \ell_i\int_0^1 \kappa_i(t,\gamma)\,dt.
\label{eq:coordinate_response_elin}
\end{equation}
\end{corollary}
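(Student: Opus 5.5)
The plan is to start from Eq.~\eqref{eq:Elin_integral_two} in Proposition~\ref{thm:linear_response} and exploit the fact that $L=\log D=\operatorname{diag}(\ell_1,\dots,\ell_n)$ is diagonal with $\ell_i=\log a_i$. For any matrix $M$, $\operatorname{tr}[LM]=\sum_i \ell_i M_{ii}$. Taking $M=H_t(\beta_0H_t+\gamma I)^{-2}$ gives
\begin{equation*}
\operatorname{tr}\!\left[L H_t(\beta_0H_t+\gamma I)^{-2}\right]=\sum_{i=1}^n \ell_i\,\kappa_i(t,\gamma).
\end{equation*}
Substituting this into the integral and exchanging the finite sum with the integral over $[0,1]$ yields Eq.~\eqref{eq:coordinate_response_elin}. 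To justify the exchange and integrability, note that each $\kappa_i(\cdot,\gamma)$ is continuous in $t$: $H_t=e^{tL}H_0e^{tL}$ is smooth, and $(\beta_0H_t+\gamma I)^{-1}$ is smooth because $\beta_0H_t+\gamma I\succeq\gamma I\succ0$.

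For the nonnegativity claim, $H_t\succeq0$ since it is a congruence $D_tH_0D_t$ of $H_0\succeq0$ by a symmetric $D_t$. Diagonalize $H_t=U\Lambda U^\top$ with $\Lambda=\operatorname{diag}(\lambda_k)$, $\lambda_k\ge0$. Then
\begin{equation*}
H_t(\beta_0H_t+\gamma I)^{-2}=U\,\operatorname{diag}\!\bigl(\lambda_k/(\beta_0\lambda_k+\gamma)^2\bigr)\,U^\top,
\end{equation*}
which is symmetric positive semidefinite because the functional calculus preserves the commutation and each scalar factor is nonnegative. Its diagonal entries are therefore nonnegative:
\begin{equation*}
\kappa_i=\sum_k U_{ik}^2\,\lambda_k/(\beta_0\lambda_k+\gamma)^2\ge0.
\end{equation*}
This holds for every $i$, $t$, and $\gamma>0$. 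The condition $\gamma>0$, together with $\beta_0>0$ as a precision, guarantees that the denominators are positive.

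There is no real obstacle here. The only point needing care is verifying that the product $H_t(\beta_0H_t+\gamma I)^{-2}$ is PSD, and not just some product of PSD matrices. That follows from the fact that both factors are functions of the same $H_t$, the commutation already used in the proof of Proposition~\ref{thm:linear_response}.
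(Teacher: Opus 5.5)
Your proposal is correct and follows the paper's proof: you expand $\operatorname{tr}[LH_t(\beta_0H_t+\gamma I)^{-2}]$ coordinatewise using the diagonal $L$, and you get nonnegativity of $\kappa_i$ by diagonalizing $H_t$ to show $H_t(\beta_0H_t+\gamma I)^{-2}$ is positive semidefinite. Your extra remarks on continuity in $t$, on $H_t\succeq 0$ via congruence, and the explicit formula $\kappa_i=\sum_k U_{ik}^2\lambda_k/(\beta_0\lambda_k+\gamma)^2$ are accurate details that the paper leaves implicit.
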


\begin{proof}
Because $L=\operatorname{diag}(\ell_1,\dots,\ell_n)$ is diagonal,
\begin{equation}
\operatorname{tr}\!\left[L H_t (\beta_0 H_t+\gamma I)^{-2}\right]
=
\sum_{i=1}^n \ell_i\left[H_t(\beta_0 H_t+\gamma I)^{-2}\right]_{ii},
\end{equation}
which gives Eq.~\eqref{eq:coordinate_response_elin}. To see nonnegativity, diagonalize $H_t=U_t\Lambda_t U_t^\top$ with $\Lambda_t=\operatorname{diag}(\lambda_1,\dots,\lambda_n)$, $\lambda_j\ge 0$. Then
\begin{equation}
H_t(\beta_0 H_t+\gamma I)^{-2}
=
U_t\operatorname{diag}\!\left(\frac{\lambda_j}{(\beta_0\lambda_j+\gamma)^2}\right)U_t^\top.
\end{equation}
This matrix is positive semidefinite, so every diagonal entry, in particular $\kappa_i(t,\gamma)$, is nonnegative.
\end{proof}

\section{Existing Assets}
\label{app:assets}
Datasets. 
    \begin{itemize}
        \item  WikiText-2~\cite{wikitext}, CC-BY-SA 3.0; \item C4~\cite{c4}, ODC-BY 1.0 (released through the AllenAI mirror);
        \item ARC-Challenge and ARC-Easy~\cite{arc}, CC-BY-SA 4.0;
        \item HellaSwag~\cite{hellaswag}, MIT License;
        \item PIQA~\cite{piqa}, Academic Free License v3.0;
        \item Winogrande~\cite{winogrande}, CC-BY 4.0.
    \end{itemize}

    Models.
    \begin{itemize}
        \item Llama-2-13B and Llama-2-70B~\cite{llama2}, Llama 2 Community License Agreement;
        \item Llama-3-8B, Llama-3.2-1B, and Llama-3.2-3B~\cite{llama3}, Meta Llama 3 Community License;
        \item Qwen3-8B~\cite{qwen3}, Apache License 2.0;
        \item Mistral-7B-v0.3~\cite{Mistral7}, Apache License 2.0.
    \end{itemize}
    
    Tooling. 
    \begin{itemize}
        \item lm-evaluation-harness~\cite{lm-eval}, MIT License.
        \item GPTQModel~\cite{gptqmodel}, Apache License 2.0;
        \item QuIP\#~\cite{quipsharp},  GPL-3.0 license.
    \end{itemize}

%%%%%%%%%%%%%%%%%%%%%%%%%%%%%%%%%%%%%%%%%%%%%%%%%%%%%%%%%%%%

\newpage
\section*{NeurIPS Paper Checklist}

\begin{enumerate}

\item {\bf Claims}
    \item[] Question: Do the main claims made in the abstract and introduction accurately reflect the paper's contributions and scope?
    \item[] Answer: \answerYes{} % Replace by \answerYes{}, \answerNo{}, or \answerNA{}.
    \item[] Justification: %\justificationTODO{}
    The abstract and Section 1 state the message-passing view of row-wise PTQ, the MPQ algorithm, and empirical improvements over standard hard-decision baselines. Section 6 reports results matching these claims across seven dense models and three bit budgets.
    \item[] Guidelines:
    \begin{itemize}
        \item The answer \answerNA{} means that the abstract and introduction do not include the claims made in the paper.
        \item The abstract and/or introduction should clearly state the claims made, including the contributions made in the paper and important assumptions and limitations. A \answerNo{} or \answerNA{} answer to this question will not be perceived well by the reviewers. 
        \item The claims made should match theoretical and experimental results, and reflect how much the results can be expected to generalize to other settings. 
        \item It is fine to include aspirational goals as motivation as long as it is clear that these goals are not attained by the paper. 
    \end{itemize}

\item {\bf Limitations}
    \item[] Question: Does the paper discuss the limitations of the work performed by the authors?
    \item[] Answer: \answerYes{} % Replace by \answerYes{}, \answerNo{}, or \answerNA{}.
    \item[] Justification: %\justificationTODO{}
    A dedicated Limitations section in Appendix~\ref{app:limitations} discusses the asymptotic-only theory and the codebook scope of the empirical evaluation.
    \item[] Guidelines:
    \begin{itemize}
        \item The answer \answerNA{} means that the paper has no limitation while the answer \answerNo{} means that the paper has limitations, but those are not discussed in the paper. 
        \item The authors are encouraged to create a separate ``Limitations'' section in their paper.
        \item The paper should point out any strong assumptions and how robust the results are to violations of these assumptions (e.g., independence assumptions, noiseless settings, model well-specification, asymptotic approximations only holding locally). The authors should reflect on how these assumptions might be violated in practice and what the implications would be.
        \item The authors should reflect on the scope of the claims made, e.g., if the approach was only tested on a few datasets or with a few runs. In general, empirical results often depend on implicit assumptions, which should be articulated.
        \item The authors should reflect on the factors that influence the performance of the approach. For example, a facial recognition algorithm may perform poorly when image resolution is low or images are taken in low lighting. Or a speech-to-text system might not be used reliably to provide closed captions for online lectures because it fails to handle technical jargon.
        \item The authors should discuss the computational efficiency of the proposed algorithms and how they scale with dataset size.
        \item If applicable, the authors should discuss possible limitations of their approach to address problems of privacy and fairness.
        \item While the authors might fear that complete honesty about limitations might be used by reviewers as grounds for rejection, a worse outcome might be that reviewers discover limitations that aren't acknowledged in the paper. The authors should use their best judgment and recognize that individual actions in favor of transparency play an important role in developing norms that preserve the integrity of the community. Reviewers will be specifically instructed to not penalize honesty concerning limitations.
    \end{itemize}

\item {\bf Theory assumptions and proofs}
    \item[] Question: For each theoretical result, does the paper provide the full set of assumptions and a complete (and correct) proof?
    \item[] Answer: \answerYes{} % Replace by \answerYes{}, \answerNo{}, or \answerNA{}.
    \item[] Justification: %\justificationTODO{}
    Theorem~\ref{thm:warmstart} and Proposition~\ref{thm:linear_response} state assumptions explicitly. Proof sketches appear in the main text; complete proofs and the auxiliary state-evolution specialization are in Appendix~\ref{app:theory}.
    \item[] Guidelines:
    \begin{itemize}
        \item The answer \answerNA{} means that the paper does not include theoretical results. 
        \item All the theorems, formulas, and proofs in the paper should be numbered and cross-referenced.
        \item All assumptions should be clearly stated or referenced in the statement of any theorems.
        \item The proofs can either appear in the main paper or the supplemental material, but if they appear in the supplemental material, the authors are encouraged to provide a short proof sketch to provide intuition. 
        \item Inversely, any informal proof provided in the core of the paper should be complemented by formal proofs provided in appendix or supplemental material.
        \item Theorems and Lemmas that the proof relies upon should be properly referenced. 
    \end{itemize}

    \item {\bf Experimental result reproducibility}
    \item[] Question: Does the paper fully disclose all the information needed to reproduce the main experimental results of the paper to the extent that it affects the main claims and/or conclusions of the paper (regardless of whether the code and data are provided or not)?
    \item[] Answer: \answerYes{} % Replace by \answerYes{}, \answerNo{}, or \answerNA{}.
    \item[] Justification: %\justificationTODO{}
    Section~\ref{sec:setup} specifies the calibration protocol, preprocessing pipeline, codebooks, and baseline configurations. Appendix~\ref{app:pilot} gives MPQ hyperparameters, Algorithm~\ref{alg:mpq} and Appendix~\ref{app:impl_pseudocode} provide the algorithm and a Python-style pseudocode reference.
    \item[] Guidelines:
    \begin{itemize}
        \item The answer \answerNA{} means that the paper does not include experiments.
        \item If the paper includes experiments, a \answerNo{} answer to this question will not be perceived well by the reviewers: Making the paper reproducible is important, regardless of whether the code and data are provided or not.
        \item If the contribution is a dataset and\slash or model, the authors should describe the steps taken to make their results reproducible or verifiable. 
        \item Depending on the contribution, reproducibility can be accomplished in various ways. For example, if the contribution is a novel architecture, describing the architecture fully might suffice, or if the contribution is a specific model and empirical evaluation, it may be necessary to either make it possible for others to replicate the model with the same dataset, or provide access to the model. In general. releasing code and data is often one good way to accomplish this, but reproducibility can also be provided via detailed instructions for how to replicate the results, access to a hosted model (e.g., in the case of a large language model), releasing of a model checkpoint, or other means that are appropriate to the research performed.
        \item While NeurIPS does not require releasing code, the conference does require all submissions to provide some reasonable avenue for reproducibility, which may depend on the nature of the contribution. For example
        \begin{enumerate}
            \item If the contribution is primarily a new algorithm, the paper should make it clear how to reproduce that algorithm.
            \item If the contribution is primarily a new model architecture, the paper should describe the architecture clearly and fully.
            \item If the contribution is a new model (e.g., a large language model), then there should either be a way to access this model for reproducing the results or a way to reproduce the model (e.g., with an open-source dataset or instructions for how to construct the dataset).
            \item We recognize that reproducibility may be tricky in some cases, in which case authors are welcome to describe the particular way they provide for reproducibility. In the case of closed-source models, it may be that access to the model is limited in some way (e.g., to registered users), but it should be possible for other researchers to have some path to reproducing or verifying the results.
        \end{enumerate}
    \end{itemize}

\item {\bf Open access to data and code}
    \item[] Question: Does the paper provide open access to the data and code, with sufficient instructions to faithfully reproduce the main experimental results, as described in supplemental material?
    \item[] Answer: \answerNo{} % Replace by \answerYes{}, \answerNo{}, or \answerNA{}.
    \item[] Justification: %\justificationTODO{}
    We do not release code at submission. The reference Python-style pseudocode in Appendix~\ref{app:impl_pseudocode}, the exact update formulas in Appendix~\ref{app:implementation}, and the hyperparameters in Appendix~\ref{app:pilot} together specify the algorithm sufficiently for reproduction. We commit to releasing the codebase at the camera-ready stage.
    \item[] Guidelines:
    \begin{itemize}
        \item The answer \answerNA{} means that paper does not include experiments requiring code.
        \item Please see the NeurIPS code and data submission guidelines (\url{https://neurips.cc/public/guides/CodeSubmissionPolicy}) for more details.
        \item While we encourage the release of code and data, we understand that this might not be possible, so \answerNo{} is an acceptable answer. Papers cannot be rejected simply for not including code, unless this is central to the contribution (e.g., for a new open-source benchmark).
        \item The instructions should contain the exact command and environment needed to run to reproduce the results. See the NeurIPS code and data submission guidelines (\url{https://neurips.cc/public/guides/CodeSubmissionPolicy}) for more details.
        \item The authors should provide instructions on data access and preparation, including how to access the raw data, preprocessed data, intermediate data, and generated data, etc.
        \item The authors should provide scripts to reproduce all experimental results for the new proposed method and baselines. If only a subset of experiments are reproducible, they should state which ones are omitted from the script and why.
        \item At submission time, to preserve anonymity, the authors should release anonymized versions (if applicable).
        \item Providing as much information as possible in supplemental material (appended to the paper) is recommended, but including URLs to data and code is permitted.
    \end{itemize}

\item {\bf Experimental setting/details}
    \item[] Question: Does the paper specify all the training and test details (e.g., data splits, hyperparameters, how they were chosen, type of optimizer) necessary to understand the results?
    \item[] Answer: \answerYes{} % Replace by \answerYes{}, \answerNo{}, or \answerNA{}.
    \item[] Justification: %\justificationTODO{}
    Section~\ref{sec:setup} specifies calibration data, preprocessing, codebooks, and baseline variants. The pilot sweep that motivates the hyperparameter values are reported in Figure~\ref{fig:stability_appendix} of Appendix~\ref{app:pilot}.
    \item[] Guidelines:
    \begin{itemize}
        \item The answer \answerNA{} means that the paper does not include experiments.
        \item The experimental setting should be presented in the core of the paper to a level of detail that is necessary to appreciate the results and make sense of them.
        \item The full details can be provided either with the code, in appendix, or as supplemental material.
    \end{itemize}

\item {\bf Experiment statistical significance}
    \item[] Question: Does the paper report error bars suitably and correctly defined or other appropriate information about the statistical significance of the experiments?
    \item[] Answer: \answerNo{} % Replace by \answerYes{}, \answerNo{}, or \answerNA{}.
    \item[] Justification: %\justificationTODO{}
    Appendix~\ref{app:calibration_resampling} reports mean and sample standard deviation over three paired C4 calibration draws for 3-bit and 4-bit scalar quantization on four Llama models. Error bars measure calibration-sampling variability. Other experiments report individual runs.
    \item[] Guidelines:
    \begin{itemize}
        \item The answer \answerNA{} means that the paper does not include experiments.
        \item The authors should answer \answerYes{} if the results are accompanied by error bars, confidence intervals, or statistical significance tests, at least for the experiments that support the main claims of the paper.
        \item The factors of variability that the error bars are capturing should be clearly stated (for example, train/test split, initialization, random drawing of some parameter, or overall run with given experimental conditions).
        \item The method for calculating the error bars should be explained (closed form formula, call to a library function, bootstrap, etc.)
        \item The assumptions made should be given (e.g., Normally distributed errors).
        \item It should be clear whether the error bar is the standard deviation or the standard error of the mean.
        \item It is OK to report 1-sigma error bars, but one should state it. The authors should preferably report a 2-sigma error bar than state that they have a 96\% CI, if the hypothesis of Normality of errors is not verified.
        \item For asymmetric distributions, the authors should be careful not to show in tables or figures symmetric error bars that would yield results that are out of range (e.g., negative error rates).
        \item If error bars are reported in tables or plots, the authors should explain in the text how they were calculated and reference the corresponding figures or tables in the text.
    \end{itemize}

\item {\bf Experiments compute resources}
    \item[] Question: For each experiment, does the paper provide sufficient information on the computer resources (type of compute workers, memory, time of execution) needed to reproduce the experiments?
    \item[] Answer: \answerNo{} % Replace by \answerYes{}, \answerNo{}, or \answerNA{}.
    \item[] Justification: %\justificationTODO{}
    Section~\ref{sec:experiments} specifies the timing and memory protocol on a desktop with single RTX 5090 and 64 GB RAM. The Llama-2-70B runs use H200 or L40S cluster but did not record the compute time. 
    \item[] Guidelines:
    \begin{itemize}
        \item The answer \answerNA{} means that the paper does not include experiments.
        \item The paper should indicate the type of compute workers CPU or GPU, internal cluster, or cloud provider, including relevant memory and storage.
        \item The paper should provide the amount of compute required for each of the individual experimental runs as well as estimate the total compute. 
        \item The paper should disclose whether the full research project required more compute than the experiments reported in the paper (e.g., preliminary or failed experiments that didn't make it into the paper). 
    \end{itemize}
    
\item {\bf Code of ethics}
    \item[] Question: Does the research conducted in the paper conform, in every respect, with the NeurIPS Code of Ethics \url{https://neurips.cc/public/EthicsGuidelines}?
    \item[] Answer: \answerYes{} % Replace by \answerYes{}, \answerNo{}, or \answerNA{}.
    \item[] Justification: %\justificationTODO{}
    The research conforms to the NeurIPS Code of Ethics. The paper studies algorithmic improvements to weight quantization for already-publicly-released language models and does not involve human subjects, sensitive data, or new model releases.
    \item[] Guidelines:
    \begin{itemize}
        \item The answer \answerNA{} means that the authors have not reviewed the NeurIPS Code of Ethics.
        \item If the authors answer \answerNo, they should explain the special circumstances that require a deviation from the Code of Ethics.
        \item The authors should make sure to preserve anonymity (e.g., if there is a special consideration due to laws or regulations in their jurisdiction).
    \end{itemize}

\item {\bf Broader impacts}
    \item[] Question: Does the paper discuss both potential positive societal impacts and negative societal impacts of the work performed?
    \item[] Answer: \answerYes{} % Replace by \answerYes{}, \answerNo{}, or \answerNA{}.
    \item[] Justification: %\justificationTODO{}
    Appendix~\ref{app:broader_impact} discusses both potential positive impacts (lower deployment cost, broader accessibility, reduced inference energy footprint) and potential negative impacts (acceleration of harmful language model applications) of the work.
    \item[] Guidelines:
    \begin{itemize}
        \item The answer \answerNA{} means that there is no societal impact of the work performed.
        \item If the authors answer \answerNA{} or \answerNo, they should explain why their work has no societal impact or why the paper does not address societal impact.
        \item Examples of negative societal impacts include potential malicious or unintended uses (e.g., disinformation, generating fake profiles, surveillance), fairness considerations (e.g., deployment of technologies that could make decisions that unfairly impact specific groups), privacy considerations, and security considerations.
        \item The conference expects that many papers will be foundational research and not tied to particular applications, let alone deployments. However, if there is a direct path to any negative applications, the authors should point it out. For example, it is legitimate to point out that an improvement in the quality of generative models could be used to generate Deepfakes for disinformation. On the other hand, it is not needed to point out that a generic algorithm for optimizing neural networks could enable people to train models that generate Deepfakes faster.
        \item The authors should consider possible harms that could arise when the technology is being used as intended and functioning correctly, harms that could arise when the technology is being used as intended but gives incorrect results, and harms following from (intentional or unintentional) misuse of the technology.
        \item If there are negative societal impacts, the authors could also discuss possible mitigation strategies (e.g., gated release of models, providing defenses in addition to attacks, mechanisms for monitoring misuse, mechanisms to monitor how a system learns from feedback over time, improving the efficiency and accessibility of ML).
    \end{itemize}
    
\item {\bf Safeguards}
    \item[] Question: Does the paper describe safeguards that have been put in place for responsible release of data or models that have a high risk for misuse (e.g., pre-trained language models, image generators, or scraped datasets)?
    \item[] Answer: \answerNA{} % Replace by \answerYes{}, \answerNo{}, or \answerNA{}.
    \item[] Justification: %\justificationTODO{}
    The paper does not release new pre-trained models, datasets, or other assets with high misuse risk. The contribution is an algorithm for quantizing existing publicly released models.
    \item[] Guidelines:
    \begin{itemize}
        \item The answer \answerNA{} means that the paper poses no such risks.
        \item Released models that have a high risk for misuse or dual-use should be released with necessary safeguards to allow for controlled use of the model, for example by requiring that users adhere to usage guidelines or restrictions to access the model or implementing safety filters. 
        \item Datasets that have been scraped from the Internet could pose safety risks. The authors should describe how they avoided releasing unsafe images.
        \item We recognize that providing effective safeguards is challenging, and many papers do not require this, but we encourage authors to take this into account and make a best faith effort.
    \end{itemize}

\item {\bf Licenses for existing assets}
    \item[] Question: Are the creators or original owners of assets (e.g., code, data, models), used in the paper, properly credited and are the license and terms of use explicitly mentioned and properly respected?
    \item[] Answer: \answerYes{} % Replace by \answerYes{}, \answerNo{}, or \answerNA{}.
    \item[] Justification: %\justificationTODO{}
    All datasets, models, and tooling used in our experiments are publicly available, cited in the references, and used in accordance with their respective license terms. See Appendix~\ref{app:assets}.

    \item[] Guidelines:
    \begin{itemize}
        \item The answer \answerNA{} means that the paper does not use existing assets.
        \item The authors should cite the original paper that produced the code package or dataset.
        \item The authors should state which version of the asset is used and, if possible, include a URL.
        \item The name of the license (e.g., CC-BY 4.0) should be included for each asset.
        \item For scraped data from a particular source (e.g., website), the copyright and terms of service of that source should be provided.
        \item If assets are released, the license, copyright information, and terms of use in the package should be provided. For popular datasets, \url{paperswithcode.com/datasets} has curated licenses for some datasets. Their licensing guide can help determine the license of a dataset.
        \item For existing datasets that are re-packaged, both the original license and the license of the derived asset (if it has changed) should be provided.
        \item If this information is not available online, the authors are encouraged to reach out to the asset's creators.
    \end{itemize}

\item {\bf New assets}
    \item[] Question: Are new assets introduced in the paper well documented and is the documentation provided alongside the assets?
    \item[] Answer: \answerNA{} % Replace by \answerYes{}, \answerNo{}, or \answerNA{}.
    \item[] Justification: %\justificationTODO{}
    This submission does not release new assets. The paper introduces an algorithm (MPQ); it does not release new datasets, pre-trained models, or other artifacts. We commit to releasing an open-source reference implementation of MPQ along with all baseline solver implementations, calibration scripts, and evaluation scripts at the camera-ready stage.
    \item[] Guidelines:
    \begin{itemize}
        \item The answer \answerNA{} means that the paper does not release new assets.
        \item Researchers should communicate the details of the dataset\slash code\slash model as part of their submissions via structured templates. This includes details about training, license, limitations, etc. 
        \item The paper should discuss whether and how consent was obtained from people whose asset is used.
        \item At submission time, remember to anonymize your assets (if applicable). You can either create an anonymized URL or include an anonymized zip file.
    \end{itemize}

\item {\bf Crowdsourcing and research with human subjects}
    \item[] Question: For crowdsourcing experiments and research with human subjects, does the paper include the full text of instructions given to participants and screenshots, if applicable, as well as details about compensation (if any)? 
    \item[] Answer: \answerNA{} % Replace by \answerYes{}, \answerNo{}, or \answerNA{}.
    \item[] Justification: %\justificationTODO{}
    The paper does not involve crowdsourcing or research with human subjects.
    \item[] Guidelines:
    \begin{itemize}
        \item The answer \answerNA{} means that the paper does not involve crowdsourcing nor research with human subjects.
        \item Including this information in the supplemental material is fine, but if the main contribution of the paper involves human subjects, then as much detail as possible should be included in the main paper. 
        \item According to the NeurIPS Code of Ethics, workers involved in data collection, curation, or other labor should be paid at least the minimum wage in the country of the data collector. 
    \end{itemize}

\item {\bf Institutional review board (IRB) approvals or equivalent for research with human subjects}
    \item[] Question: Does the paper describe potential risks incurred by study participants, whether such risks were disclosed to the subjects, and whether Institutional Review Board (IRB) approvals (or an equivalent approval/review based on the requirements of your country or institution) were obtained?
    \item[] Answer: \answerNA{} % Replace by \answerYes{}, \answerNo{}, or \answerNA{}.
    \item[] Justification: %\justificationTODO{}
    The paper does not involve research with human subjects, so IRB approval is not applicable.
    \item[] Guidelines:
    \begin{itemize}
        \item The answer \answerNA{} means that the paper does not involve crowdsourcing nor research with human subjects.
        \item Depending on the country in which research is conducted, IRB approval (or equivalent) may be required for any human subjects research. If you obtained IRB approval, you should clearly state this in the paper. 
        \item We recognize that the procedures for this may vary significantly between institutions and locations, and we expect authors to adhere to the NeurIPS Code of Ethics and the guidelines for their institution. 
        \item For initial submissions, do not include any information that would break anonymity (if applicable), such as the institution conducting the review.
    \end{itemize}

\item {\bf Declaration of LLM usage}
    \item[] Question: Does the paper describe the usage of LLMs if it is an important, original, or non-standard component of the core methods in this research? Note that if the LLM is used only for writing, editing, or formatting purposes and does \emph{not} impact the core methodology, scientific rigor, or originality of the research, declaration is not required.
    %this research? 
    \item[] Answer: \answerNA{} % Replace by \answerYes{}, \answerNo{}, or \answerNA{}.
    \item[] Justification: %\justificationTODO{}
    LLMs were used for writing polishing only in this paper.
    \item[] Guidelines:
    \begin{itemize}
        \item The answer \answerNA{} means that the core method development in this research does not involve LLMs as any important, original, or non-standard components.
        \item Please refer to our LLM policy in the NeurIPS handbook for what should or should not be described.
    \end{itemize}

\end{enumerate}

\end{document}